\newtheorem{theorem}{Theorem}
\newtheorem{definition}{Definition}
\newtheorem{proposition}{Proposition}
\newtheorem{corollary}{Corollary}
\newtheorem{lemma}{Lemma}
\newtheorem{example}{Example}
\newdefinition{remark}{Remark}
\newproof{proof}{Proof}
\newif\if@borderstar
\def\bordermatrix{\@ifnextchar*{%
\@borderstartrue\@bordermatrix@i}{\@borderstarfalse\@bordermatrix@i*}%
}
\def\@bordermatrix@i*{\@ifnextchar[{\@bordermatrix@ii}{\@bordermatrix@ii[()]}}
\def\@bordermatrix@ii[#1]#2{%
\begingroup
\m@th\@tempdima8.75\p@\setbox\z@\vbox{%
\def\cr{\crcr\noalign{\kern 2\p@\global\let\cr\endline }}%
\ialign {$##$\hfil\kern 2\p@\kern\@tempdima &\thinspace %
\hfil $##$\hfil &&\quad\hfil $##$\hfil\crcr\omit\strut %
\hfil\crcr\noalign{\kern -\baselineskip}#2\crcr\omit %
\strut\cr}}%
\setbox\tw@\vbox{\unvcopy\z@\global\setbox\@ne\lastbox}%
\setbox\tw@\hbox{\unhbox\@ne\unskip\global\setbox\@ne\lastbox}%
\setbox\tw@\hbox{%
$\kern\wd\@ne\kern -\@tempdima\left\@firstoftwo#1%
\if@borderstar\kern2pt\else\kern -\wd\@ne\fi%
\global\setbox\@ne\vbox{\box\@ne\if@borderstar\else\kern 2\p@\fi}%
\vcenter{\if@borderstar\else\kern -\ht\@ne\fi%
\unvbox\z@\kern-\if@borderstar2\fi\baselineskip}%
\if@borderstar\kern-2\@tempdima\kern2\p@\else\,\fi\right\@secondoftwo#1$%
}\null \;\vbox{\kern\ht\@ne\box\tw@}%
\endgroup
}
\begin{document}

\begin{frontmatter}
\title{Connectedness of graphs and its application to connected matroids through covering-based rough sets}
\author{Aiping Huang}
\author{William Zhu\corref{cor1}}
\ead{williamfengzhu@gmail.com}
\address{Lab of Granular Computing,\\
Minnan Normal University, Zhangzhou 363000, China}
\cortext[cor1]{Corresponding author.}

\begin{abstract}
Graph theoretical ideas are highly utilized by computer science fields especially data mining.
In this field, a data structure can be designed in the form of tree.
Covering is a widely used form of data representation in data mining and covering-based rough sets provide a systematic approach to this type of representation.
In this paper, we study the connectedness of graphs through covering-based rough sets and apply it to connected matroids.
First, we present an approach to inducing a covering by a graph, and then study the connectedness of the graph from the viewpoint of the covering approximation operators.
Second, we construct a graph from a matroid, and find the matroid and the graph have the same connectedness, which makes us to use covering-based rough sets to study connected matroids.
In summary, this paper provides a new approach to studying graph theory and matroid theory.
\end{abstract}

\begin{keyword}
Covering-based rough set, Connected graph, Approximation operator, Connected matroid, Matrix.
\end{keyword}

\end{frontmatter}

\section{Introduction}
In different areas, various applications are addressed using graph models.
This model arrangements of various objects or technologies lead to new inventions and modifications in the existing environment for enhancement in those fields.
Connected graph, as an important concept of graph theory, is used in iatrology to study the spread of epidemics in a crowd where the vertices represent the persons in the crowd and the edges represent the spread of disease.
This model is important for tracking the spread of the disease, thus conducive to controlling it.
Just because graph theory can be used to modeling various applications, it is highly utilized by computer science applications.
Especially in data mining~\cite{Nettleton13Data,ShelokarQuirinCord¨®n13Amultiobjective}, image segmentation~\cite{PengZhangZhang13Asurvey,ZhouZhengWei13Texture}, clustering~\cite{Fukami14New,GossenKotzybaN¨¹rnberger14Graph}, networking~\cite{ChudnovskyRiesZwolsc11Claw}.

In matroid theory~\cite{Oxley93Matroid}, there are many terminology borrowed from graph theory, largely because it is an abstraction of various notions in the field.
The connectedness for matroids, which is extended by the corresponding notion for graph, is closely linked with the connectedness for graphs.
When the matroid is the cycle matroid induced by a graph, the matroid and the graph have the same connectedness.
It lays a sound foundation for us to apply graphs to study the connectedness for matroids.
In addition to that, matroids provide well-established platforms for greedy algorithms~\cite{Edmonds71Matroids} which may come with the algorithms for graphs.
The reasons given above are the motivations behind the study of the connected matroids from the perspective of connected graphs.

In this paper, we pay our attention to the connectedness for graphs through covering-based rough sets and apply it to connected matroids.
First, we introduce an approach to inducing a covering from a graph.
Based on the covering, covering-based rough set theory is used to study the issue of the connectedness for the graph.
As an application, we use the connection in graphs to study the connectedness for matroids.
In this part, we construct a graph from a matroid and find that they have the same connectedness, which makes us use covering-based rough sets to study the connection of matroids.
In a word, this work provides new viewpoints for studying graph theory and matroid theory.

\section{Preliminaries}
\label{S:Preliminaries}

To facilitate our discussion, some fundamental concepts related to covering-based rough sets, graphs and matroids are reviewed in this section.

\subsection{Covering-based rough sets}
As a generalization of a partition, the covering has more applicability and universality.
To begin with, the concept of covering is introduced.

\begin{definition}(Covering~\cite{ZhuWang03Reduction})
Let $U$ be a universe of discourse and $C$ a family of subsets of $U$.
If none of the subsets in $C$ are empty and $\bigcup C = U$, then $C$ is called a covering of $U$ and the pair $(U, C)$ is called a covering approximation space.
\end{definition}

As the two key concepts of covering-based rough sets, the lower and upper approximation operators are defined to describe objects.

\begin{definition}(Approximation operators~\cite{pomykala1987approximation})
Let $C$ be a covering of $U$ and $X \subseteq U$.
The covering upper and lower approximations of $X$, denoted by $\overline{C}(X)$ and $\underline{C}(X)$, respectively, are defined as:
\begin{center}
$~~~~~~~~~~~~~\overline{C}(X) = \bigcup \{K \in C: K \bigcap X \neq \emptyset\}$,\\
$\underline{C}(X) = \overline{C}(X^{c})^{c},~~~~~~~~~~~~~~~~$
\end{center}
where $X^{c}$ denotes the complement of $X$ in $U$.
\end{definition}

Immediately following the above definition, certain properties of the covering upper approximation operator are presented, while the corresponding properties of the covering lower one can be obtained by the duality property.

\begin{proposition}\cite{pomykala1987approximation}
\label{P:thepropertiesofcoveringupperapproximationoperator}
Let $C$ be a covering of $U$.
The operator $\overline{C}$ has the following properties:\\
(1) $\overline{C}(\emptyset) = \emptyset$~~~~~~~~~~~~~~~~~~~~~~~~~~~~~~~~~~~~~~~~~~~~~~~~~~~~~~~~~~~~~~~~~~~(Normality).\\
(2) $\overline{C}(U) = U$~~~~~~~~~~~~~~~~~~~~~~~~~~~~~~~~~~~~~~~~~~~~~~~~~~~~~~~~~~~~~~~~(Co-normality).\\
(3) For all $X \subseteq U$, $X \subseteq \overline{C}(X)$~~~~~~~~~~~~~~~~~~~~~~~~~~~~~~~~~~~~ (Extension).\\
(4) For all $X, Y \subseteq U$, $\overline{C}(X \bigcup Y) = \overline{C}(X) \bigcup \overline{C}(Y)$~~~(Additivity).\\
(5) If $X \subseteq Y \subseteq U$, then $\overline{C}(X) \subseteq \overline{C}(Y)$~~~~~~~~~~~~~~~~~~~~~~(Monotonicity).
\end{proposition}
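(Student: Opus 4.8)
The plan is to argue all five properties directly from the defining formula $\overline{C}(X) = \bigcup\{K \in C: K \cap X \neq \emptyset\}$, reducing each to an elementary statement about when the index family $\{K \in C: K \cap X \neq \emptyset\}$ is nonempty or how it behaves under the set operations applied to $X$. Nothing beyond the two covering axioms, namely that every member $K$ is nonempty and that $\bigcup C = U$, will be required.

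For Normality (1), I would observe that $K \cap \emptyset = \emptyset$ for every $K$, so the index family is empty and its union is $\emptyset$. For Co-normality (2), since no member of $C$ is empty, $K \cap U = K \neq \emptyset$ for every $K \in C$; hence the index family is all of $C$, and $\bigcup C = U$ by the covering property. For Extension (3), given $x \in X$, the covering property supplies some $K \in C$ with $x \in K$; then $x \in K \cap X$ witnesses $K \cap X \neq \emptyset$, so $K$ lies in the index family and $x \in K \subseteq \overline{C}(X)$.

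The one property that genuinely uses a distributive identity is Additivity (4), which I regard as the crux. The key step is the equivalence $K \cap (X \cup Y) \neq \emptyset \iff K \cap X \neq \emptyset \text{ or } K \cap Y \neq \emptyset$, obtained from $K \cap (X \cup Y) = (K \cap X) \cup (K \cap Y)$. This shows that the index family for $X \cup Y$ is exactly the union of the index families for $X$ and for $Y$; taking the unions of the corresponding members of $C$ then yields $\overline{C}(X \cup Y) = \overline{C}(X) \cup \overline{C}(Y)$.

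Finally, Monotonicity (5) I would deduce from Additivity rather than reprove from scratch: if $X \subseteq Y$ then $Y = X \cup Y$, so by (4) we get $\overline{C}(Y) = \overline{C}(X) \cup \overline{C}(Y)$, which forces $\overline{C}(X) \subseteq \overline{C}(Y)$. The main obstacle, such as it is, lies only in respecting the convention that the union of the empty family is $\emptyset$ (used in (1)) and in invoking the two covering axioms at exactly the right places (used in (2) and (3)); the remainder is routine set-theoretic manipulation.
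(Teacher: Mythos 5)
Your proof is correct and complete: each of the five properties is verified directly from the definition $\overline{C}(X) = \bigcup\{K \in C : K \cap X \neq \emptyset\}$, the two covering axioms are invoked exactly where they are needed (nonemptiness of members for (2), $\bigcup C = U$ for (3)), and deriving (5) from (4) via $Y = X \cup Y$ is a clean economy. The paper itself states this proposition as a cited result from the literature and gives no proof, so there is nothing to compare against; your argument is the standard direct verification and fills that gap correctly.
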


\subsection{Graphs}
Graphs are discrete structures to model the correlation between data.
Theoretically, a graph is a pair $G = (V, E)$ comprising a set $V$ of vertices and a set $E$ of edges~\cite{West04Introduction}.
Generally, we write $V(G)$ for $V$ and $E(G)$ for $E$, particularly when several graphs are considered.
Each element of $E(G)$ has either one or two vertices associated with it, called its endpoints.
Through endpoints, the relationship between vertices and edges can be established by the form of matrices, namely incidence matrices.
Let $G = (V, E)$ be a graph with $V = \{v_{1}, v_{2}, \cdots, v_{n}\}$ and $E = \{e_{1}, e_{2}, \cdots, e_{m}\}$.
The incidence matrix $I(G)$ of the graph is the $n \times m$ matrix in which entry $m_{ij}$ is $1$ if $v_{i}$ is an endpoint of $e_{j}$ and otherwise is $0$.

The edges in a graph may be directed or undirected.
If any edge of the graph is undirected, we say the graph is an undirected graph.
In this case, we write $e = uv$ or $e = vu$ for an edge $e$ with endpoints $u$ and $v$.
In a graph $G$, two vertices are adjacent if there is an edge that has them as endpoints.
An isolated vertex is a vertex not adjacent to any other vertices.
If an edge links the same two endpoints, the edge is called a loop, and if there are edges having the same pair of endpoints, they are called multiple edges.
A simple graph is a graph without loops or multiple edges.
If the graph is simple and the vertices of it are pairwise adjacent, it is called the complete graph.
A subgraph of the graph $G$ is a graph whose vertices and edges are subsets of $G$.
The subgraph induced by a subset of vertices $K \subseteq V(G)$ is called a vertex-induced subgraph of $G$, and denoted by $G_{K}$.
This subgraph has vertex set $K$, and its edge set $E^{'} \subseteq E(G)$ consists of those edges from $E(G)$ that have both their endpoints in $K$.

\begin{example}
Let $G = (V, E)$ be a graph as given in (I) of Figure \ref{anundirectedgraph}.
Suppose $K = \{b, c, d\}$.
Then the vertex-induced subgraph $G_{K}$ is shown in (III) of Figure \ref{anundirectedgraph}.
\end{example}

A path of a graph $G$ is a list $v_{0} v_{1} \cdots v_{k}$ of distinct vertices such that, for all $1 \leq i \leq k$, $v_{i - 1} v_{i}$ is an edge of $G$, and a $(u, v)-$path is a path and has first vertex $u$ and last vertex $v$.

A graph $G$ is connected if for every pair of distinct vertices $u$ and $v$, there is a path connecting both.
If $G$ has a $(u, v)-$path, then $u$ is connected to $v$.
The connection relation on $V(G)$ consists of the order pairs $(u, v)$ such that $u$ is connected to $v$.
It was noted in \cite{West04Introduction} that the connection relation is an equivalence relation on $V(G)$.
Suppose the equivalence classes of the relation are $V_{1}, V_{2}, \cdots, V_{s}$.
Then the vertex-induced subgraphs $G_{V_{1}}, G_{V_{2}}, \cdots, G_{V_{s}}$ are called the connected components of the graph.
The number of the connected components of graph $G$ is denoted by $\omega(G)$.

\subsection{Matroids}
Matroid theory borrows extensively from the terminology of graph theory,
largely because it is an abstraction of various notions of central importance in the field, such as independent sets and circuits.
The following definition of matroids is presented in terms of circuits.

\begin{proposition}(Circuit axiom~\cite{Oxley93Matroid})
\label{P:circuitaxiom}
Let $\mathcal{C}$ be a family of subsets of $U$.
There exists a matroid $M$ such that $\mathcal{C}=\mathcal{C}(M)$ if and only if
$\mathcal{C}$ satisfies the following conditions:\\
(C1) $\emptyset \notin \mathcal{C}$.\\
(C2) For all $C_{1}, C_{2} \in \mathcal{C}$, if $C_{1} \subseteq C_{2}$, then $C_{1} = C_{2}$.\\
(C3) For all $C_{1}, C_{2} \in \mathcal{C}$, if $C_{1} \neq C_{2}$ and $x \in C_{1} \bigcap C_{2}$, then there exists
$C_{3} \in \mathcal{C}$ such that $C_{3} \subseteq C_{1} \bigcup C_{2}-\{x\}$.
\end{proposition}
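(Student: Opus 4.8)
This is the classical circuit characterization of matroids (the cited result of Oxley), so the plan is to establish the two implications of the equivalence, passing freely between the circuit family $\mathcal{C}$ and the family of independent sets it determines. Throughout I recall that the circuits of a matroid are precisely its minimal dependent sets, and I will use the rank function $r$ of a matroid as a convenient tool.

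\textbf{Necessity.} Suppose $\mathcal{C} = \mathcal{C}(M)$ for a matroid $M$. Axiom (C1) is immediate, since the empty set is independent and hence is not a minimal dependent set. Axiom (C2) is just the minimality of circuits: if $C_{1} \subseteq C_{2}$ with $C_{2}$ a minimal dependent set, then the dependent set $C_{1}$ cannot be a proper subset, so $C_{1} = C_{2}$. The only substantive point is the elimination axiom (C3). Given distinct circuits $C_{1}, C_{2}$ with $x \in C_{1} \cap C_{2}$, note that by (C2) neither contains the other, so $C_{1} \cap C_{2}$ is a proper subset of each circuit and is therefore independent, giving $r(C_{1} \cap C_{2}) = |C_{1} \cap C_{2}|$ while $r(C_{i}) = |C_{i}| - 1$. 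Submodularity of $r$ then yields $r(C_{1} \cup C_{2}) \le r(C_{1}) + r(C_{2}) - r(C_{1} \cap C_{2}) = |C_{1} \cup C_{2}| - 2$. Hence $(C_{1} \cup C_{2}) \setminus \{x\}$ has $|C_{1} \cup C_{2}| - 1$ elements but rank at most $|C_{1} \cup C_{2}| - 2$, so it is dependent and contains a circuit $C_{3}$ with $C_{3} \subseteq (C_{1} \cup C_{2}) \setminus \{x\}$, which is exactly (C3).

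\textbf{Sufficiency.} Conversely, assuming (C1)--(C3), I would construct the matroid directly by declaring $\mathcal{I} = \{X \subseteq U : X \text{ contains no member of } \mathcal{C}\}$ and verifying the independence axioms. Axiom (I1) ($\emptyset \in \mathcal{I}$) follows from (C1), and the hereditary axiom (I2) is immediate because any member of $\mathcal{C}$ contained in a subset of $X$ is already contained in $X$. Once $(U, \mathcal{I})$ is shown to be a matroid, I would identify its circuits with $\mathcal{C}$: if $C \in \mathcal{C}$ then $C$ is dependent, and by (C2) every proper subset of $C$ contains no member of $\mathcal{C}$ and so is independent, making $C$ a minimal dependent set; conversely any minimal dependent set contains some $D \in \mathcal{C}$ and minimality forces equality, so $\mathcal{C}(M) = \mathcal{C}$. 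This identification step is routine given (C2).

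\textbf{Main obstacle.} The crux is the augmentation axiom (I3): given $I_{1}, I_{2} \in \mathcal{I}$ with $|I_{1}| < |I_{2}|$, one must produce $e \in I_{2} \setminus I_{1}$ with $I_{1} \cup \{e\} \in \mathcal{I}$. I would argue by contradiction, supposing $I_{1} \cup \{e\}$ is dependent for every $e \in I_{2} \setminus I_{1}$; since $I_{1}$ is independent, each such set contains a circuit $C_{e} \in \mathcal{C}$ with $e \in C_{e}$ and $C_{e} \setminus \{e\} \subseteq I_{1}$. Choosing a counterexample minimizing $|I_{2} \setminus I_{1}|$, I would apply the elimination axiom (C3) to two of these circuits sharing an element of $I_{1}$, producing a smaller circuit inside $I_{1} \cup I_{2}$ that either supplies a legitimate augmentation or contradicts the extremal choice, ultimately forcing a member of $\mathcal{C}$ inside the independent set $I_{1}$ or $I_{2}$. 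Managing the bookkeeping of this elimination-and-descent step, and in particular making the extremal selection interact correctly with the eliminated element, is the hard part, and it is precisely where all three axioms are used simultaneously.
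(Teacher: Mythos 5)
The paper offers no proof of this proposition: it is stated as a preliminary and attributed to Oxley's \emph{Matroid Theory}, so there is no argument in the paper to compare yours against. Judged on its own terms, your necessity direction is complete and correct --- (C1) and (C2) from minimality of dependent sets, and (C3) from $r(C_i)=|C_i|-1$, $r(C_1\cap C_2)=|C_1\cap C_2|$, and submodularity, which is exactly the standard rank-function argument.

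The sufficiency direction, however, contains a genuine gap. You reduce everything to the augmentation axiom (I3) and then explicitly stop short, conceding that ``managing the bookkeeping of this elimination-and-descent step \dots is the hard part.'' That hard part is the entire content of the theorem in this direction, and the route you sketch does not obviously close. Your setup produces circuits $C_e$ with $e\in C_e$ and $C_e\setminus\{e\}\subseteq I_1$ for each $e\in I_2\setminus I_1$, and you propose to eliminate a common element of $I_1$ from two of them; but (C3) applied to $C_e$ and $C_{e'}$ at $x\in I_1$ only yields a circuit inside $(C_e\cup C_{e'})\setminus\{x\}\subseteq (I_1\setminus\{x\})\cup\{e,e'\}$, which is neither contained in $I_1$ nor in $I_2$ and so contradicts nothing. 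The standard proof works differently: one chooses a failing pair $(I_1,I_2)$ minimizing $|I_1\setminus I_2|$, shows via that minimality that for some fixed $e\in I_1\setminus I_2$ every set $(I_2\setminus\{f\})\cup\{e\}$ with $f\in I_2\setminus I_1$ contains a circuit $C_f$ through $e$ avoiding $f$, then applies (C3) to two \emph{distinct} such circuits at the element $e$ to force a circuit inside $I_2$, and if all the $C_f$ coincide, forces that single circuit inside $I_1$. Both the extremal quantity ($|I_1\setminus I_2|$, not $|I_2\setminus I_1|$) and the family of circuits being eliminated differ from what you describe, so the missing step is not mere bookkeeping --- it is a different argument that you would need to supply.
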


If the family $\mathcal{C}$ of subsets of $U$ satisfies the circuit axiom, then the members of $\mathcal{C}$ are called the circuits of $M$ and $U$ is called the ground set of $M$.
We often write $\mathcal{C}(M)$ for $\mathcal{C}$ and $U(M)$ for $U$, particularly when several matroids are being considered.
For a matroid $M$, if $C \in \mathcal{C}(M)$ and $C = \{x\}$, we say $x$ is a loop of the matroid.
If $\mathcal{C}(M)$ dose not contain any single-point set, the matroid is loopless.
By the family of circuits, the connected matroids are defined.
For any two elements $e, f$ of $U(M)$, define the relation $\gamma$ of $U(M)$ by $e \gamma f$ if and only if $e = f$ or $M$ has a circuit containing $e$ and $f$.
In~\cite{Oxley93Matroid}, it was indicated that the relation $\gamma$ is an equivalence relation.
For any $e \in U(M)$, the $\gamma-$equivalence class $\gamma(e) = \{e\} \bigcup \{f \in U(M): M$ has circuit containing $e$ and $f\}$ is called a connected component of $M$.
If $M$ has only one connected component $U(M)$, we call $M$ is connected; Otherwise $M$ is disconnected.
In fact, we can also describe the connected graph by the following proposition.

\begin{proposition}\cite{Oxley93Matroid}
\label{P:anequivalencecharacterizationforconnectedgraph}
The matroid $M$ is connected if and only if, for every pair of distinct elements of $U(M)$, there is a circuit containing both.
\end{proposition}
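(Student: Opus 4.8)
The plan is to prove the equivalence purely by unwinding the definition of connectedness through the relation $\gamma$ and its equivalence classes; no deep matroid structure is needed beyond the already-cited fact that $\gamma$ is an equivalence relation on $U(M)$. The key observation is that ``$M$ has only one connected component'' is the same as saying that $\gamma(e) = U(M)$ for every (equivalently, for some) $e \in U(M)$, because the $\gamma$-equivalence classes partition $U(M)$. Once this translation is in place, each direction becomes a one-line check against the definition $\gamma(e) = \{e\} \cup \{f \in U(M): M$ has a circuit containing $e$ and $f\}$.

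For the necessity direction, I would assume $M$ is connected and take an arbitrary pair of distinct elements $e, f \in U(M)$. Since $M$ has the single connected component $U(M)$, we have $\gamma(e) = U(M)$, so $f \in \gamma(e)$. Because $f \neq e$, the definition of $\gamma(e)$ forces $M$ to have a circuit containing both $e$ and $f$, which is exactly the desired conclusion.

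For the sufficiency direction, I would assume that every pair of distinct elements lies in a common circuit and fix an arbitrary $e \in U(M)$. For each $f \neq e$ the hypothesis yields a circuit containing $e$ and $f$, so $f \in \gamma(e)$; together with $e \in \gamma(e)$ this gives $\gamma(e) = U(M)$. Since $e$ was arbitrary, all elements lie in a single equivalence class, so $M$ has only one connected component and is therefore connected.

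The argument contains no genuine obstacle; it is essentially a restatement of the definition. The only points deserving care are the passage between ``one connected component'' and ``$\gamma(e) = U(M)$'', which rests on the partition property of the equivalence relation $\gamma$, and the degenerate case $|U(M)| \leq 1$, where the circuit condition holds vacuously and $M$ is trivially connected, so that the stated equivalence still holds.
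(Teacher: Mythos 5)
Your proof is correct. The paper itself gives no proof of this proposition---it is imported directly from Oxley's \emph{Matroid Theory}---and your argument is exactly the natural one: given the paper's definition of connectedness via the equivalence relation $\gamma$ (whose transitivity is the only nontrivial ingredient, and which the paper already cites as known), the statement reduces to the observation that ``one connected component equal to $U(M)$'' is the same as ``$\gamma(e)=U(M)$ for every $e$,'' which in turn unwinds to the common-circuit condition for every distinct pair. Your handling of the degenerate case $|U(M)|\leq 1$ is a reasonable extra precaution.
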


\section{The study of the connectedness of graph through covering-based rough sets}
\label{S:Thestudyofconnectioningraphsthroughcovering-basedroughsets}
Connected graphs are important discrete structures.
Problems in many fields can be addressed using the graph models.
In this section, we apply the covering-based rough sets to study the issue of the connection in a graph.
Considering the matrix is related significantly to the theory, the incidence matrices of a graph are also borrowed to study the issue.

In this subsection, we present certain approaches to judging whether a graph is connected or not through covering approximation operators.
For this purpose, we need to establish a relationship between a graph and a covering.
In~\cite{WangZhu13Equivalent}, it proposed the following approach to converting a graph to a covering.

\begin{definition}
\label{D:thefamilyinducedbygraphs}
Let $G = (V, E)$ be an undirected simple graph.
One can define a family $F(G)$ of subsets of $V$ as follows: For all $u, v \in V$,
\begin{center}
   $\{u, v\} \in F(G) \Leftrightarrow uv \in E$.
\end{center}
\end{definition}

Definition \ref{D:thefamilyinducedbygraphs} indicates that a graph can be represented by a family of subsets of its vertices.
However, the family may not be a covering of the vertex set.

\begin{example}
\label{E:Example1}
Let $G = (V, E)$ be the graph as given in (I) of Figure \ref{anundirectedgraph} where $V = \{a, b ,c , d, e\}$ and $E = \{e_{1}, e_{2}, e_{3}, e_{4}, e_{5}\}$.
\begin{figure}[h]
   \begin{center}
   \includegraphics[width = 4.5 in]{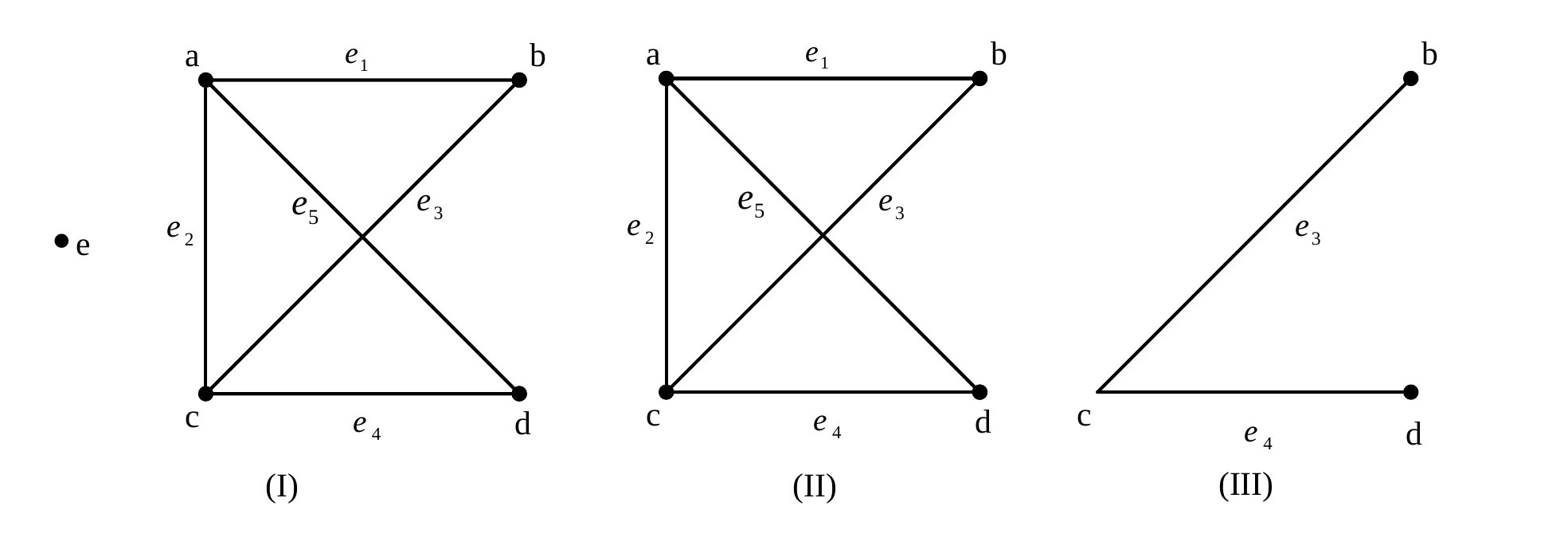}
   \caption{An undirected simple graph.}
   \label{anundirectedgraph}
    \end{center}
\end{figure}
By Definition \ref{D:thefamilyinducedbygraphs}, we know $F(G) = \{\{a, b\}$, $\{a, c\}$, $\{b, c\}$, $\{c, d\}$, $\{d, a\}\}$ and it is not a covering of $V$ because there does not exist any edge to connect with the vertex $e$.
\end{example}

In fact, the type of graph which can induce a covering was also embodied in \cite{WangZhu13Equivalent}.

\begin{proposition}
Let $G = (V, E)$ be an undirected simple graph.
The family $F(G)$ is a covering of $V$ if and only if $G$ has no isolated vertices.
\end{proposition}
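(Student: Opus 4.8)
The plan is to unwind the definition of a covering and to match its two requirements against the hypothesis on isolated vertices. Recall that $F(G)$ is a covering of $V$ precisely when (i) every member of $F(G)$ is non-empty and (ii) $\bigcup F(G) = V$. First I would observe that condition (i) is automatic: since $G$ is a simple graph, each edge $uv \in E$ joins two \emph{distinct} endpoints $u \neq v$, so each member $\{u, v\} \in F(G)$ is a two-element, hence non-empty, set. Thus the entire content of the statement reduces to characterising exactly when (ii) holds.

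For the sufficiency direction, I would assume $G$ has no isolated vertices. The inclusion $\bigcup F(G) \subseteq V$ is immediate because every member of $F(G)$ is a subset of $V$. For the reverse inclusion, I would take an arbitrary $w \in V$; since $w$ is not isolated, it is adjacent to some $v \neq w$, so $wv \in E$ and hence $\{w, v\} \in F(G)$ by Definition \ref{D:thefamilyinducedbygraphs}. Therefore $w \in \{w, v\} \subseteq \bigcup F(G)$, giving $V \subseteq \bigcup F(G)$ and so $\bigcup F(G) = V$. Combined with (i), this shows that $F(G)$ is a covering.

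For the necessity direction, I would assume $F(G)$ is a covering, so $\bigcup F(G) = V$. I would pick any $w \in V$ and use $w \in \bigcup F(G)$ to locate a member $\{u, v\} \in F(G)$ with $w \in \{u, v\}$. By Definition \ref{D:thefamilyinducedbygraphs} this member corresponds to an edge $uv \in E$ having $w$ as an endpoint, so $w$ is adjacent to the other endpoint and is therefore not isolated. As $w$ was arbitrary, $G$ has no isolated vertices.

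There is no genuine obstacle in this argument; the only point requiring care is the bookkeeping between the two defining conditions of a covering. In particular, I expect the step most easily overlooked to be noticing that the simplicity of $G$ renders the non-emptiness requirement vacuous, so that the equivalence rests entirely on translating the statement ``$w$ lies in the union $\bigcup F(G)$'' into ``$w$ has an incident edge,'' which is exactly the negation of being an isolated vertex.
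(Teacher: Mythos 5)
Your proof is correct and complete: the reduction to the single condition $\bigcup F(G) = V$ (noting that non-emptiness of the members is automatic for a simple graph) and the translation of ``$w \in \bigcup F(G)$'' into ``$w$ is incident with an edge, hence not isolated'' is exactly the natural argument. The paper itself does not prove this proposition but merely cites it from [WangZhu13Equivalent], so there is nothing to contrast your argument with; it is the standard one and nothing is missing.
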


Therefore, the graphs studied in this section are undirected, simple and without isolated vertices unless otherwise specified.
For the type of graph $G$, we denote the covering induced from it by $C(G)$.
First, the connection of any pair of distant vertices of the graph is studied by the covering.

\begin{proposition}
\label{P:thefirstequivalentcharacterizationforverticesconnection}
Let $G = (V, E)$ be a graph and $u, v$ be two distinct vertices of $V$.
The vertex $u$ is connected to $v$ if and only if $\{u , v\} \in C(G)$ or there exists $\{K_{1}, K_{2}, \cdots, K_{n}\}$ $\subseteq C(G)$ satisfying $u \in K_{1}$, $v \in K_{n}$ and $K_{i} \bigcap K_{i + 1} \neq \emptyset$ for any $i = 1, 2, \cdots, n - 1$.
\end{proposition}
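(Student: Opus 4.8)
The plan is to prove the two implications separately, exploiting the fact that every member of $C(G)$ is exactly a two-element set $\{x,y\}$ with $xy \in E$, so a covering element is nothing but an edge together with its two endpoints. This dictionary between edges and covering elements is what lets a path and a chain of covering elements be translated into one another.

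For the forward direction, suppose $u$ is connected to $v$. Then $G$ contains a $(u,v)$-path $u = v_{0} v_{1} \cdots v_{k} = v$. If $k = 1$, then $uv \in E$, so $\{u,v\} \in C(G)$ and the first alternative holds. If $k \geq 2$, I would set $K_{i} = \{v_{i-1}, v_{i}\}$ for $i = 1, \ldots, k$; each $K_{i}$ lies in $C(G)$ because $v_{i-1} v_{i}$ is an edge, and the chain conditions are immediate, since $u = v_{0} \in K_{1}$, $v = v_{k} \in K_{k}$, and $v_{i} \in K_{i} \cap K_{i+1}$ gives $K_{i} \cap K_{i+1} \neq \emptyset$. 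Thus the path translates directly into the required sequence of covering elements.

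For the converse, if $\{u,v\} \in C(G)$ then $uv \in E$, and because $u$ and $v$ are distinct this single edge is a $(u,v)$-path, so $u$ is connected to $v$. Otherwise I would use the chain $\{K_{1}, \ldots, K_{n}\}$. The key observation is that the two endpoints of any single covering element are adjacent, hence connected, and that by the preliminaries the connection relation on $V(G)$ is an equivalence relation. Because $K_{i}$ and $K_{i+1}$ share a vertex, transitivity forces all vertices appearing in $K_{i} \cup K_{i+1}$ into one equivalence class; inducting on $i$ then places every vertex of $K_{1} \cup \cdots \cup K_{n}$ in a single connected component. Since $u \in K_{1}$ and $v \in K_{n}$, the vertices $u$ and $v$ lie in the same component, that is, $u$ is connected to $v$.

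The main obstacle is conceptual rather than computational: a chain of overlapping edges yields a \emph{walk} from $u$ to $v$ whose vertices need not be distinct, so it is not literally a \emph{path} in the sense of the definition. Rather than prune such a walk down to a path by hand, I would sidestep the issue entirely by arguing at the level of the equivalence relation, where each overlap merges two classes and transitivity does the bookkeeping. This is precisely why recording, in the preliminaries, that the connection relation is an equivalence relation is what makes the converse clean.
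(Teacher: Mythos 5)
Your proof is correct, and the forward direction coincides with the paper's: both translate a $(u,v)$-path $v_0v_1\cdots v_k$ into the chain $K_i=\{v_{i-1},v_i\}$ and observe that consecutive members share a vertex. The two arguments part ways only in how the converse is closed. The paper picks a vertex $u_i\in K_i\cap K_{i+1}$ for each $i$, assembles the walk $u_0u_1\cdots u_n$ from $u$ to $v$, and then prunes repeated vertices (``delete $u_{i+1},\dots,u_j$ whenever $u_i=u_j$'') to extract a genuine path. You instead never build a walk at all: you note that the two endpoints of each $K_i$ are adjacent, hence connected, and that since $K_i\cap K_{i+1}\neq\emptyset$, transitivity of the connection relation (recorded in the preliminaries as an equivalence relation on $V(G)$) forces all of $K_1\cup\cdots\cup K_n$ into a single equivalence class containing both $u$ and $v$. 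This is exactly the walk-versus-path obstacle you identify, and your route buys a cleaner argument --- the paper's deletion step is the standard but slightly informal ``every walk contains a path'' reduction, whereas yours delegates the bookkeeping to an already-established equivalence relation. The trade-off is that the paper's version is more constructive: it actually exhibits the $(u,v)$-path, which is occasionally what one wants downstream, while yours only certifies membership in the same connected component. Both are complete proofs of the stated equivalence.
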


\begin{proof}
$(``\Rightarrow")$: If $u$ and $v$ are adjacent, then $\{u, v\} \in C(G)$.
If $u$ is connected to $v$ but they are not adjacent, then there exists a $(u, v)-$path $u_{1}u_{2}u_{3}$ $\cdots u_{n - 1}u_{n}$ $ u_{n + 1}$, where $u = u_{1}$ and $v = u_{n + 1}$.
Let $K_{i} = \{u_{i}, u_{i + 1}\}$ for any $i = 1, 2, \cdots n$.
It is clear that $\{K_{1}, K_{2}, \cdots, K_{n}\} \subseteq C(G)$.
And $u \in K_{1}$, $v \in K_{n}$ and $K_{i} \bigcap K_{i + 1} \neq \emptyset$ for any $i = 1, 2, \cdots, n - 1$ because $u_{i + 1} \in K_{i} \bigcap K_{i + 1}$.

$(``\Leftarrow")$: For $u, v \in V$, if $\{u, v\} \in C(G)$, then $u$ and $v$ are adjacent.
If there exists $\{K_{1}, K_{2}, \cdots, K_{n}\} \subseteq C(G)$ satisfying $u \in K_{1}$, $v \in K_{n}$ and $K_{i} \bigcap K_{i + 1} \neq \emptyset$ for any $i = 1, 2, \cdots, n - 1$, then let $u_{i} \in K_{i} \bigcap K_{i + 1}$ where $i = 1, 2, \cdots, n - 1$.
Then there exists a list $u_{0} u_{1} u_{2} \cdots u_{n - 1} u_{n}$ connecting $u$ and $v$, where $u_{0} = u$ and $u_{n} = v$.
If there exist $i, j \in \{0, 1, \cdots, n\}$ (we may as well suppose $i < j$) such that $u_{i} = u_{j}$, then delete the the vertices $u_{i + 1}, u_{i + 2}, \cdots u_{j}$ of the list.
Finally, we can obtain a $(u,v)-$path.
Therefore $u$ is connected to $v$.
\end{proof}

Based on the above proposition, an equivalent characterization for connected graph is established in terms of the covering induced by the graph.

\begin{theorem}
\label{T:thefirstnecessaryandsufficientconditionforconnectedgraph}
Let $G = (V, E)$ be a graph.
The graph is connected if and only if, for any pair of distinct vertices $u$ and $v$ of $V$, $\{u, v\} \in C(G)$ or there exists $\{K_{1}, K_{2}, \cdots, K_{n}\}$ $\subseteq C(G)$ satisfying $u \in K_{1}$, $v \in K_{n}$ and $K_{i} \bigcap K_{i + 1} \neq \emptyset$ for any $i = 1, 2, \cdots, n - 1$.
\end{theorem}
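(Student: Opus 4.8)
The plan is to derive this theorem directly from Proposition~\ref{P:thefirstequivalentcharacterizationforverticesconnection}, since the statement is essentially the universal closure over all vertex pairs of the single-pair characterization already established there. The first thing I would do is unpack the definition of a connected graph given in the Graphs subsection: $G$ is connected precisely when, for every pair of distinct vertices $u$ and $v$, there is a $(u,v)$-path, that is, $u$ is connected to $v$. This reduces the theorem to showing that the statement ``$u$ is connected to $v$ for all distinct $u,v$'' is equivalent to the statement ``the covering condition holds for all distinct $u,v$''.

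For the forward direction, I would assume $G$ is connected and fix an arbitrary pair of distinct vertices $u,v$. Connectedness gives that $u$ is connected to $v$, and then I would invoke the $(``\Rightarrow")$ half of Proposition~\ref{P:thefirstequivalentcharacterizationforverticesconnection} to conclude that either $\{u,v\} \in C(G)$ or the required chain $\{K_1, \ldots, K_n\} \subseteq C(G)$ exists. Since $u,v$ were arbitrary, the covering condition holds for every pair.

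For the reverse direction, I would assume the covering condition holds for every pair of distinct vertices and again fix an arbitrary such pair $u,v$. Applying the $(``\Leftarrow")$ half of Proposition~\ref{P:thefirstequivalentcharacterizationforverticesconnection} yields that $u$ is connected to $v$, and since this holds for all pairs, $G$ is connected by definition.

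I do not expect a genuine obstacle here. The substantive work---converting a chain of pairwise-overlapping covering blocks into an honest $(u,v)$-path by deleting repeated vertices---was already carried out inside the proof of Proposition~\ref{P:thefirstequivalentcharacterizationforverticesconnection}. The only point requiring care is to confirm that the quantifier ``for every pair of distinct vertices'' in the definition of connectedness aligns exactly with the ``for any pair of distinct vertices'' appearing in the theorem statement, so that the pointwise equivalence lifts cleanly to the all-pairs equivalence; this alignment is immediate.
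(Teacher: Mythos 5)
Your proposal is correct and matches the paper's proof exactly: the paper also derives the theorem as an immediate consequence of Proposition~\ref{P:thefirstequivalentcharacterizationforverticesconnection} together with the definition of a connected graph, merely stating this in one line where you spell out the quantifier bookkeeping. No gaps.
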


\begin{proof}
It is straightforward from Proposition \ref{P:thefirstequivalentcharacterizationforverticesconnection} and the definition of connected graphs.
\end{proof}

\begin{example}
\label{E:anexampleforconnectedgraph}
Let $G = (V, E)$ be the graph as given in (II) of Figure \ref{anundirectedgraph} where $V = \{a, b, c, d\}$ and $E = \{e_{1}, e_{2}, e_{3}, e_{4}, e_{5}\}$.
Then the covering induced by $G$ is $C(G) = \{K_{1}$, $K_{2}$, $K_{3}$, $K_{4}$, $K_{5}\}$, where $K_{1} = \{a, b\}$, $K_{2} = \{a, c\}$, $K_{3} = \{b, c\}$, $K_{4} = \{c, d\}$ and $K_{5} = \{d, a\}$.
For the pair of distinct vertices $b$ and $d$ of $V$, there exists $\{K_{3}, K_{4}\} \subseteq C(G)$ satisfying $b \in K_{3}$ and $d \in K_{4}$ and $K_{3} \bigcap K_{4} \neq \emptyset$, thus $b$ is connected to $d$.
In the same way, we find that, for any two distinct vertices of $V$, they are connected, i.e., $G$ is a connected graph.
\end{example}

In fact, utilizing Theorem \ref{T:thefirstnecessaryandsufficientconditionforconnectedgraph}, the connected graphs can also be characterized equivalently from the viewpoint of the covering upper approximation operator.

\begin{theorem}
\label{T:thesecondnecessaryandsufficientconditionforconnectedgraphfromupperoperator}
Let $G = (V, E)$ be a graph.
The graph is connected if and only if, for any $\emptyset \neq X \subset V$, $\overline{C(G)}(X) \neq X$.
\end{theorem}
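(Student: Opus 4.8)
The plan is to reduce the statement to the elementary graph-theoretic fact that a graph is connected precisely when no nonempty proper set of vertices is ``closed'' under adjacency, i.e., when every nonempty proper $X \subset V$ has an edge joining it to its complement $V \setminus X$. The bridge between this fact and the operator $\overline{C(G)}$ is a single observation about the induced covering: since every member of $C(G)$ is a two-element set $\{a, b\}$ with $ab \in E$, the upper approximation unwinds to $\overline{C(G)}(X) = X \cup \{w \in V : w \text{ is adjacent to some vertex of } X\}$. The inclusion $X \subseteq \overline{C(G)}(X)$ is just the Extension property, Proposition \ref{P:thepropertiesofcoveringupperapproximationoperator}(3), combined with the absence of isolated vertices. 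From this description the crucial equivalence follows: $\overline{C(G)}(X) = X$ if and only if there is no edge between $X$ and $V \setminus X$. Indeed, a vertex $w \in \overline{C(G)}(X) \setminus X$ lies in some $K = \{a, b\} \in C(G)$ meeting $X$, and since $w \notin X$ the other endpoint must lie in $X$, exhibiting a crossing edge; conversely, any crossing edge $\{a, b\}$ with $a \in X$ and $b \notin X$ forces $b \in \overline{C(G)}(X) \setminus X$.

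First I would prove the forward direction. Assuming $G$ is connected, fix a nonempty proper $X \subset V$ and choose $u \in X$ and $v \in V \setminus X$. By connectedness, or equivalently by the chain characterization of Theorem \ref{T:thefirstnecessaryandsufficientconditionforconnectedgraph}, there is a $(u, v)$-path; walking along it from $u$ to $v$, the path begins inside $X$ and ends outside $X$, so some consecutive pair of vertices forms an edge with one endpoint in $X$ and one outside. By the equivalence above this crossing edge yields $\overline{C(G)}(X) \neq X$, as required.

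Next I would handle the converse by contraposition: assume $G$ is disconnected and produce a nonempty proper $X$ with $\overline{C(G)}(X) = X$. Taking the vertex set $X = V_{1}$ of any one connected component supplies such a set. It is nonempty, and it is proper because a disconnected graph has at least two components. Since the connection relation is an equivalence relation whose classes are exactly the component vertex sets, any neighbor of a vertex of $X$ lies in the same class and hence in $X$; thus no edge crosses between $X$ and $V \setminus X$, and by the equivalence $\overline{C(G)}(X) = X$, contradicting the hypothesis.

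The argument is short, and the only place demanding care is the bridging observation, namely verifying that $\overline{C(G)}(X) = X$ is genuinely equivalent to the nonexistence of a crossing edge. This is where the standing hypothesis that $G$ has no isolated vertices is silently used, so that $\bigcup C(G) = V$ and every vertex of $X$ is recovered inside $\overline{C(G)}(X)$, and where one must be careful that membership of $w$ in the union is witnessed by an edge having its \emph{other} endpoint in $X$. Once this equivalence is in place, both implications are routine consequences of the definition of connected components.
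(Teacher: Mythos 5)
Your proof is correct, and it takes a genuinely cleaner route than the paper's in one direction. The paper proves the forward implication by assuming $\overline{C(G)}(X)=X$ for a nonempty $X$, picking $u\in X$, invoking the chain characterization of Theorem \ref{T:thefirstnecessaryandsufficientconditionforconnectedgraph} to get blocks $K_1,\dots,K_n$ linking $u$ to an arbitrary $v$, and then propagating the fixed-point property block by block via monotonicity and additivity until $v\in X$; you instead first establish the combinatorial description $\overline{C(G)}(X)=X\cup\{w: w\text{ adjacent to some vertex of }X\}$, so that $\overline{C(G)}(X)=X$ is equivalent to the absence of an edge between $X$ and $V\setminus X$, and then the forward direction reduces to the elementary observation that a $(u,v)$-path from inside $X$ to outside $X$ must cross the cut. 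This bypasses the iteration of the operator entirely and isolates the real content of the theorem (outer definable sets are exactly the edge-cut-free sets); it also makes explicit where the no-isolated-vertices hypothesis enters, which the paper leaves implicit. Your backward direction is essentially the paper's argument in contrapositive form: the paper shows that the connectivity class $P_u$ is a fixed point of $\overline{C(G)}$ and concludes $P_u=V$ from the hypothesis, while you exhibit a component of a disconnected graph as a nonempty proper fixed point; these are the same idea. The paper's chain-propagation argument has the minor virtue of reusing its Theorem \ref{T:thefirstnecessaryandsufficientconditionforconnectedgraph} and staying entirely inside the operator calculus, whereas yours is shorter and more transparent about the underlying graph theory.
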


\begin{proof}
$(``\Rightarrow"):$ Since $\overline{C(G)}(\emptyset) = \emptyset$, we need to prove only the result: $\forall~\emptyset \neq X \subseteq V$, if $\overline{C(G)}(X) = X$, then $X = V$.
Pitch $u \in X$.
For all $v \in V - \{u\}$, if $\{u, v\} \in C(G)$, then $v \in \{u, v\} \subseteq \overline{C(G)}(X) = X$ which implies $V - \{u\} \subseteq X$.
If $u$ and $v$ are not adjacent, then there exists $\{K_{1}, K_{2}, \cdots, K_{n}\} \subseteq C(G)$ satisfying $u \in K_{1}$, $v \in K_{n}$ and $K_{i} \bigcap K_{i + 1} \neq \emptyset$ for any $i = 1, 2, \cdots, n - 1$.
Since $u \in K_{1} \bigcap X$, $K_{1} \subseteq \overline{C(G)}(X) = X$.
Because $K_{2} \bigcap K_{1} \neq \emptyset$, $K_{2} \subseteq \overline{C(G)}(K_{1})$.
Combining with the monotonicity of $\overline{C(G)}$, we have $K_{2} \subseteq \overline{C(G)}(K_{1}) \subseteq \overline{C(G)}(\overline{C(G)}(X)) = \overline{C(G)}(X) = X$.
In the same way, we can obtain $v \in K_{n} \subseteq X$, then $V - \{u\} \subseteq X$.
Since $u \in X$, $V \subseteq X$.
Combining with $X \subseteq V$, then $X = V$.

$(``\Leftarrow")$: For all $u \in V$, let $P_{u} = \{v \in V:$ $v$ is connected to $u\}$.
Then $P_{u} \neq \emptyset$ because $u \in P_{u}$.
Next, we want to prove $\overline{C(G)}(P_{u}) = P_{u}$.
For all $v \in \overline{C(G)}(P_{u})$, there exists $K \in C(G)$ such that $v \in K$ and $K \bigcap P_{u} \neq \emptyset$.
We may as well suppose $w \in K \bigcap P_{u}$, then $v$ is connected to $w$ and $w$ is connected to $u$, thus $v$ is connected to $u$, i.e., $v \in P_{u}$.
Thus $\overline{C(G)}(P_{u}) \subseteq P_{u}$.
Utilizing the extension of $\overline{C(G)}$, we have $P_{u} \subseteq \overline{C(G)}(P_{u})$.
Then $\overline{C(G)}(P_{u}) = P_{u}$.
By assumption, we know $P_{u} = V$.
Therefore $G$ is connected.
\end{proof}

Given a covering approximation space $(U, C)$, for all $X \subseteq U$, if $\overline{C}(X) = X$, the set $X$ is called an outer definable set.
From the viewpoint, Theorem \ref{T:thesecondnecessaryandsufficientconditionforconnectedgraphfromupperoperator} indicates that a graph is connected if and only if the covering approximation space induced by the graph has no non-empty outer definable proper subset.

By the duality, the connected graph characterized by the covering lower approximation operator is presented as follows.

\begin{corollary}
\label{C:thesecondnecessaryandsufficientconditionforconnectedgraphfromloweroperator}
Let $G = (V, E)$ be a graph.
The graph is connected if and only if, for any $\emptyset \neq X \subset V$, $\underline{C}(X) \neq X$.
\end{corollary}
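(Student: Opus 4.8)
The plan is to derive this corollary directly from Theorem~\ref{T:thesecondnecessaryandsufficientconditionforconnectedgraphfromupperoperator} using the duality relation $\underline{C}(X) = \overline{C}(X^{c})^{c}$ that defines the lower operator, rather than reproving connectedness from scratch. The key observation is that complementation sets up a bijection on the proper nonempty subsets of $V$: a set $X$ satisfies $\emptyset \neq X \subset V$ if and only if its complement $X^{c}$ satisfies $\emptyset \neq X^{c} \subset V$. So it suffices to show that the condition ``$\underline{C}(X) \neq X$ for all $\emptyset \neq X \subset V$'' is logically equivalent to the condition ``$\overline{C(G)}(Y) \neq Y$ for all $\emptyset \neq Y \subset V$'' appearing in the theorem, and then invoke the theorem.

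First I would fix an arbitrary $X$ with $\emptyset \neq X \subset V$ and compute, using the definition of $\underline{C}$, that $\underline{C}(X) = X$ is equivalent to $\overline{C(G)}(X^{c})^{c} = X$, which by taking complements of both sides is equivalent to $\overline{C(G)}(X^{c}) = X^{c}$. Writing $Y = X^{c}$, this says precisely that $\underline{C}(X) = X$ holds if and only if $\overline{C(G)}(Y) = Y$ holds for the complementary set $Y$. Contrapositively, $\underline{C}(X) \neq X$ iff $\overline{C(G)}(Y) \neq Y$.

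Next I would run the quantifier over all admissible $X$. Since $X \mapsto X^{c}$ is a bijection from $\{X : \emptyset \neq X \subset V\}$ onto itself, the statement ``for every $\emptyset \neq X \subset V$, $\underline{C}(X) \neq X$'' is equivalent to ``for every $\emptyset \neq Y \subset V$, $\overline{C(G)}(Y) \neq Y$.'' By Theorem~\ref{T:thesecondnecessaryandsufficientconditionforconnectedgraphfromupperoperator} the latter is equivalent to $G$ being connected, which completes the argument in both directions simultaneously.

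I do not anticipate a genuine obstacle here, since the result is a formal consequence of duality and the symmetry of complementation on proper nonempty subsets; the only point requiring a little care is verifying that the complement operation really does preserve the range $\emptyset \neq \cdot \subset V$ (it sends $\emptyset$ to $V$ and $V$ to $\emptyset$, so both endpoints are correctly excluded, and it is an involution hence a bijection). One should also note that the corollary's statement writes $\underline{C}$ rather than $\underline{C(G)}$, so I would read this as the lower operator of the covering $C(G)$ and keep the notation consistent throughout.
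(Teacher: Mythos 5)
Your argument is correct and is exactly the duality argument the paper intends: the paper gives no explicit proof, simply saying the corollary follows ``by the duality,'' and your write-up is a careful expansion of precisely that step (using $\underline{C}(X)=\overline{C}(X^{c})^{c}$ and the fact that complementation is an involution on the nonempty proper subsets of $V$). No issues.
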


\begin{example}
Let $G = (V, E)$ be the graph as given in (II) of Figure \ref{anundirectedgraph}.
By simple computing, the outer definable subsets of covering approximation space $(V, C(G))$ are $\emptyset$ and $V$.
Hence $G$ is connected.
\end{example}

\section{An application to connected matroids}
\label{S:anapplicationtomatroids}
As is known, when the matroid is the cycle matroid induced by a graph, the matroid and the graph have the same connectedness.
However, for a given matroid, it may not be the cycle matroid of some graph.
Therefore, using cycle matroids to study the connectedness of matroids may not be effective.
In this section, we propose an approach to induce a graph from an arbitrary matroid.
It is interesting that the graph and the matroid have the same connectedness.
Therefore, the covering-based rough sets are used to study the connection of the matroid.
First, the method to convert a matroid to a graph is presented as follows.

\begin{definition}
\label{D:thegraphinducedbyamatroid}
Let $M$ be a matroid.
One can define an undirected simple graph $G(M) = (V, E)$ as follows:\\
(1) $V = U(M)$.\\
(2) For any $u, v \in V$ and $u \neq v$, $u v \in E \Leftrightarrow \exists C \in \mathcal{C}(M)~s.t.~\{u, v\} \subseteq C$.
\end{definition}

\begin{remark}
Once matroid $M$ has loops, the graph $G(M)$ has isolated vertices.
\end{remark}

\begin{example}
\label{E:theexampleofthegrphinducedbyamatroid}
Let $M$ be a matroid with $U(M) = \{1, 2, 3, 4, 5, 6, 7\}$ and $\mathcal{C}(M) = \{\{1, 2, 3\},$ $\{6\}$, $\{2, 4, 5\},$ $\{1, 3, 4, 5\}\}$.
The graph induced by the matroid $M$ is given in Figure \ref{thegraphinducedbyamtroid}.
\begin{figure}[h]
   \begin{center}
   \includegraphics[width = 2.3 in]{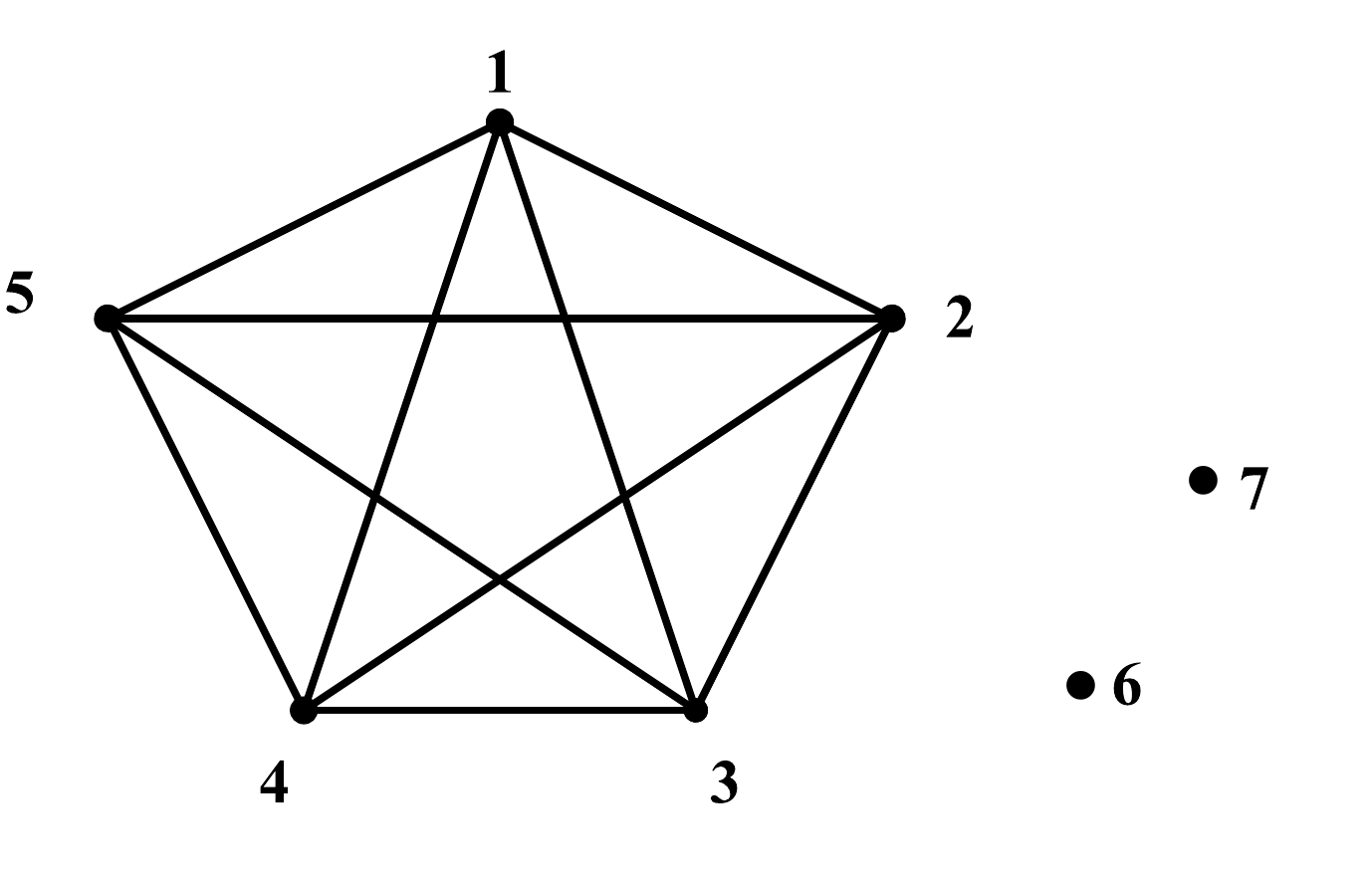}
   \caption{The graph $G(M)$ induced by $M$.}
   \label{thegraphinducedbyamtroid}
    \end{center}
\end{figure}
\end{example}

In fact, the connectedness of the graph induced by a matroid is closely related to that of the matroid.
First, the equivalent characterization for the connection of any pair of distinct vertices of the graph is presented through the circuits of the matroid.

\begin{lemma}\cite{Oxley93Matroid}
\label{L:thetransitivityofcircuits}
Let $M$ be a matroid and $C_{1}, C_{2} \in \mathcal{C}(M)$.
If $e_{1} \in C_{1} - C_{2}$, $e_{2} \in C_{2} - C_{1}$ and $C_{1} \bigcap C_{2} \neq \emptyset$, then there exists $C_{3} \in \mathcal{C}(M)$ such that $e_{1}, e_{2} \in C_{3} \subseteq C_{1} \bigcup C_{2}$.
\end{lemma}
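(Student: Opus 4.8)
The plan is to derive this strong form of circuit elimination from the weak axiom (C3) of Proposition \ref{P:circuitaxiom}, arguing by induction on the size $|C_{1} \bigcup C_{2}|$. Throughout I would lean on the incomparability axiom (C2): since two distinct circuits can never contain one another, whenever I produce an auxiliary circuit $D \subseteq C_{1} \bigcup C_{2}$ that is distinct from $C_{1}$ (respectively $C_{2}$), axiom (C2) immediately forces both $D - C_{1} \neq \emptyset$ and $D \bigcap C_{1} \neq \emptyset$ (respectively for $C_{2}$). This is the engine that keeps the relevant intersections and set-differences nonempty, which is exactly what the induction hypothesis requires.

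First I would fix an element $x \in C_{1} \bigcap C_{2}$, which is legitimate since $C_{1} \bigcap C_{2} \neq \emptyset$, and note $x \neq e_{1}, e_{2}$. Applying (C3) to $C_{1}, C_{2}$ and $x$ yields a circuit $D \subseteq (C_{1} \bigcup C_{2}) - x$. If $e_{1}, e_{2} \in D$, then $D$ is already the desired $C_{3}$ and the proof is finished. Otherwise at least one of $e_{1}, e_{2}$ is missing from $D$, and by the symmetry that exchanges $(C_{1}, e_{1})$ with $(C_{2}, e_{2})$ I may assume $e_{2} \notin D$. Using $x \in C_{1} - D$ and $x \in C_{2} - D$ together with (C2) as above, I would record that $D$ meets each of $C_{1}, C_{2}$ and also meets each of the differences $C_{1} - C_{2}$ and $C_{2} - C_{1}$.

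The main step is then to feed a suitable pair into the induction hypothesis so that the conclusion is a circuit containing both $e_{1}$ and $e_{2}$. If $e_{1} \in D$, the natural candidate is the pair $(D, C_{2})$: here $e_{1} \in D - C_{2}$, $e_{2} \in C_{2} - D$ and $D \bigcap C_{2} \neq \emptyset$, so the induction hypothesis would deliver a circuit $C_{3}$ with $e_{1}, e_{2} \in C_{3} \subseteq D \bigcup C_{2} \subseteq C_{1} \bigcup C_{2}$. If instead $e_{1} \notin D$, one first reinstates $e_{1}$ by applying the hypothesis to $(C_{1}, D)$ and then merges the resulting circuit with $C_{2}$, using $e_{2} \notin C_{1} \bigcup D$ to guarantee that the merge keeps $e_{2}$ outside the first circuit, so that the second application is again of the required form.

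The hard part will be verifying that every invocation of the induction hypothesis is applied to a \emph{strictly smaller} union, since the whole scheme collapses if $|D \bigcup C_{2}| = |C_{1} \bigcup C_{2}|$. Because $x$ lies in $C_{1} \bigcap C_{2}$ it survives into every union I form, so the decrease cannot come from deleting $x$; it must come from a witness element of a set-difference that is absent from $D$. Concretely, $|D \bigcup C_{2}| < |C_{1} \bigcup C_{2}|$ exactly when some element of $C_{1} - C_{2}$ avoids $D$, and the troublesome configuration is $C_{1} - C_{2} \subseteq D$, where the union fails to shrink. I expect this degenerate case to be the crux: resolving it requires either a more careful choice of the element eliminated by (C3) and of the pair handed to the induction hypothesis, or a second application of the hypothesis, so that the absent witness $e_{2} \notin D$ (or $e_{1} \notin D$) can always be converted into a genuine strict drop in the union size. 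Once the bookkeeping guarantees this strict decrease in all cases, the induction closes and the lemma follows.
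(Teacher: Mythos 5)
The paper offers no proof of this lemma to compare against: it is imported verbatim from Oxley's book, so the only question is whether your argument stands on its own. It does not, and the gap is precisely the one you flag in your last paragraph. The induction on $|C_{1}\cup C_{2}|$ closes only if every appeal to the inductive hypothesis is made on a pair with strictly smaller union, and, as you correctly compute, $|D\cup C_{2}|<|C_{1}\cup C_{2}|$ if and only if some element of $C_{1}-C_{2}$ avoids $D$. Axiom (C3) gives you no control over which circuit $D\subseteq(C_{1}\cup C_{2})-\{x\}$ you receive, so nothing rules out $C_{1}-C_{2}\subseteq D$; the same obstruction reappears in your second branch when you merge the reinstated circuit $C'$ with $C_{2}$, since there you need an element of $(C_{1}-C_{2})-C'$ and the distinguished element $x$ cannot supply it ($x\in C_{2}$, so it lies in every union you form). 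Saying the degenerate case ``requires either a more careful choice \dots or a second application of the hypothesis'' states the remaining problem rather than resolving it; as written the induction does not terminate, so the proof is incomplete.

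For orientation, the standard argument (Oxley, Props.\ 1.4.12 and 4.1.2) is organized differently. One first proves the \emph{strong} circuit elimination axiom---given $x\in C_{1}\cap C_{2}$ and $e_{1}\in C_{1}-C_{2}$ there is a circuit containing $e_{1}$ inside $(C_{1}\cup C_{2})-\{x\}$---by rank or closure arguments rather than by iterating (C3). The present lemma then follows by taking a counterexample pair with $|C_{1}\cup C_{2}|$ minimal and applying strong elimination twice: once to get $C_{3}\ni e_{1}$ with $C_{3}\subseteq(C_{1}\cup C_{2})-\{x\}$, and, after choosing $h\in C_{3}\cap(C_{2}-C_{1})$ (which exists by (C2)), once more to $C_{2}$ and $C_{3}$ to get $C_{4}\ni e_{2}$ with $C_{4}\subseteq(C_{2}\cup C_{3})-\{h\}$. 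The pair $(C_{1},C_{4})$ then has strictly smaller union precisely because $h$ has been eliminated from it---this is the strict decrease your single application of weak elimination cannot manufacture. To salvage your outline you would need either to establish strong elimination first or to find a genuine replacement for the missing decrease; the weak axiom applied once per step is not enough.
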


\begin{proposition}
\label{P:anequivalencecharacterizationforanytwovertexicesofthegraphinducedbymatroid}
Let $M$ be a matroid and $u, v$ a pair of distinct vertices of $U(M)$.
The vertex $u$ is connected to $v$ in graph $G(M)$ if and only if there exists $C \in \mathcal{C}(M)$ such that $\{u, v\} \subseteq C$.
\end{proposition}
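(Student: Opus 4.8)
The plan is to prove the two implications separately, with the forward direction carrying essentially all of the content. For the easy direction $(``\Leftarrow")$, suppose there exists $C \in \mathcal{C}(M)$ with $\{u, v\} \subseteq C$. Then by clause (2) of Definition \ref{D:thegraphinducedbyamatroid}, $uv$ is an edge of $G(M)$, so $u$ and $v$ are adjacent and hence $u$ is connected to $v$. No further work is needed here.

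For the forward direction $(``\Rightarrow")$, the idea is to translate a path in $G(M)$ into a chain of circuits and then collapse that chain into a single circuit using Lemma \ref{L:thetransitivityofcircuits}. Concretely, since $u$ is connected to $v$, there is a $(u,v)-$path $u = u_{0} u_{1} \cdots u_{k} = v$. Each edge $u_{i - 1} u_{i}$ arises, again by Definition \ref{D:thegraphinducedbyamatroid}, from some circuit $C_{i} \in \mathcal{C}(M)$ with $\{u_{i - 1}, u_{i}\} \subseteq C_{i}$. I would then induct on the path length $k$ to produce a single circuit $C$ with $\{u, v\} \subseteq C$. The base case $k = 1$ is exactly the existence of $C_{1}$ containing both $u$ and $v$.

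For the inductive step, the induction hypothesis supplies a circuit $C$ containing $u_{0} = u$ and $u_{k - 1}$, while we already have the circuit $C_{k}$ containing $u_{k - 1}$ and $u_{k} = v$; these two circuits share the element $u_{k - 1}$, so $C \bigcap C_{k} \neq \emptyset$. I would then apply Lemma \ref{L:thetransitivityofcircuits} to splice $C$ and $C_{k}$ into a circuit $C_{3} \subseteq C \bigcup C_{k}$ containing both $u$ and $v$. The main obstacle is that the lemma requires $u \in C - C_{k}$ and $v \in C_{k} - C$, which are not automatic, so a short case analysis must precede the application: if $v \in C$, then $C$ already contains $u$ and $v$; if $u \in C_{k}$ (in particular if $C = C_{k}$), then $C_{k}$ already contains $u$ and $v$; and only in the remaining case, where $u \notin C_{k}$ and $v \notin C$ (so necessarily $C \neq C_{k}$), are the three hypotheses $u \in C - C_{k}$, $v \in C_{k} - C$ and $C \bigcap C_{k} \neq \emptyset$ simultaneously met, at which point the lemma closes the step. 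Since the vertices along a path are pairwise distinct, $u_{k - 1}$ differs from both $u$ and $v$, so no degeneracy spoils the intersection being witnessed by $u_{k - 1}$, and the induction goes through to yield the desired circuit.
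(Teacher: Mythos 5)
Your proof is correct, and it reaches the conclusion by a genuinely cleaner route than the paper. Both arguments ultimately rest on the same key tool, Lemma \ref{L:thetransitivityofcircuits}, applied repeatedly to merge a chain of circuits coming from the edges of a $(u,v)$-path. The paper, however, organizes this as a proof by contradiction on the \emph{shortest} path: it builds the whole chain $C_1,\dots,C_n$ at once, uses minimality to argue that $u$ and $v$ avoid the interior circuits, constructs a new chain $C_1',\dots,C_{n-1}'$ of one smaller length, deduplicates it, and iterates until only two circuits remain. This requires delicate bookkeeping (tracking which $C_j'$ contain $u$ or $v$, and the removal step for repeated circuits) that your argument avoids entirely. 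Your induction on the path length collapses one circuit at a time from the end, and the three-way case split ($v\in C$; $u\in C_k$; otherwise $u\in C-C_k$, $v\in C_k-C$ with $u_{k-1}$ witnessing the nonempty intersection) is exactly what is needed to make each application of the lemma legitimate --- a point the paper's write-up handles only implicitly and somewhat laboriously. The one thing the paper's formulation buys is the explicit observation that the shortest path has length $1$, which feeds directly into the remark that every connected component of $G(M)$ is complete; your version yields the same corollary with no extra work, since the circuit you produce makes $u$ and $v$ adjacent in $G(M)$.
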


\begin{proof}
The sufficiency is straightforward.
Next, we prove the necessity.
Since $u$ is connected to $v$ in graph $G(M)$, there exists the shortest $(u, v)-$path, let us assume the length is $n$.
We conclude that $n = 1$.
Otherwise, we may well suppose the path is $u_{1} u_{2} \cdots u_{n + 1}$, where $u_{1} = u$, $u_{n + 1} = v$, and $n \geq 2$.
Since $u_{1} u_{2} \cdots u_{n + 1}$ is a path, there exist $C_{1}, C_{2}, \cdots, C_{n} \in \mathcal{C}(M)$ such that $\{u_{i}, u_{i + 1}\} \subseteq C_{i}$ for all $i = 1, 2, \cdots, n$.
Thus $u \in C_{1}$, $v \in C_{n}$ and $C_{i} \bigcap C_{i + 1} \neq \emptyset$ for all $i = 1, 2, \cdots, n - 1$.
Because the path is the shortest, the circuits $C_{1}, C_{2}, \cdots, C_{n}$ are different, and $u \notin C_{j}$ for all $j = 2, 3, \cdots, n$, and $v \notin C_{j}$ for all $j = 1, 2, \cdots, n - 1$.
As $u \in C_{1} - C_{2}$, according to $(C2)$ of the circuit axiom, we know there exists $v_{1} \in C_{2} - C_{1}$.
Combining with $C_{1} \bigcap C_{2} \neq \emptyset$ and Lemma \ref{L:thetransitivityofcircuits}, there exists $C_{1}^{'} \in \mathcal{C}(M)$ such that $\{u, v_{1}\} \subseteq C_{1}^{'} \subseteq C_{1} \bigcup C_{2}$.
Since $v_{1} \in C_{2}$, if $v_{1} \notin C_{3}$, then $v_{1} \in C_{2} - C_{3}$.
Utilizing $(C2)$ of the circuit axiom, there exists $v_{2} \in C_{3} - C_{2}$.
According to Lemma \ref{L:thetransitivityofcircuits} and $C_{2} \bigcap C_{3} \neq \emptyset$, there exists $C_{2}^{'} \in \mathcal{C}(M)$ such that $\{v_{1}, v_{2}\} \subseteq C_{2}^{'} \subseteq C_{2} \bigcup C_{3}$.
If $v_{1} \in C_{3}$, then we take $C_{2}^{'} = C_{3}$.
It is clear that $C_{1}^{'} \bigcap C_{2}^{'} \neq \emptyset$ because $v_{1} \in C_{1}^{'} \bigcap C_{2}^{'}$.
In the same way, we can obtain $C_{1}^{'}, C_{2}^{'}, \cdots, C_{n - 1}^{'} \in \mathcal{C}(M)$ such that $u \in C_{1}^{'}$, $v \in C_{n - 1}^{'}$ and $C_{i}^{'} \bigcap C_{i + 1}^{'} \neq \emptyset$ for all $i = 1, 2, \cdots, n - 2$.
Furthermore, $v \notin C_{j}^{'}$ for all $j = 1, 2, \cdots n - 2$.
If there exits $j \in \{1, 2, \cdots, n - 2\}$ such that $v \in C_{j}^{'}$, then $v \in C_{j}^{'} \subseteq C_{j} \bigcup C_{j + 1}$, i.e., $v \in C_{j}$ or $v \in C_{j + 1}$ which contradicts that $v \notin C_{j}$ for all $j \leq n - 1$.
Similarly, for all $j = 2, 3, \cdots, n - 1$, $u \notin C_{j}^{'}$.
Thus we have $C_{1}^{'} \neq C_{n - 1}^{'}$.
However, the circuits $C_{1}^{'}, C_{2}^{'}, \cdots, C_{n - 1}^{'}$ may not be all different.
Therefore, we reduce the circuits by the following step.
If there exist two distinct numbers $i, j$ of $\{1, 2, \cdots, n - 1\}$ such that $C_{i}^{'} = C_{j}^{'}$ (we may as well suppose $i < j$), then remove the circuits $C_{i + 1}^{'}, \cdots, C_{j}^{'}$.
By the step, we can obtain the family of circuits $\{C_{s_{1}}^{'}, C_{s_{2}}^{'}, \cdots, C_{s_{t}}^{'}\} (\subseteq \{C_{1}^{'}, C_{2}^{'}, \cdots, C_{n - 1}^{'}\})$ whose elements are different and satisfy the condition: $u \in C_{s_{1}}^{'}$, $v \in C_{s_{t}}^{'}$ and $C_{s_{i}} \bigcap C_{s_{i + 1}} \neq \emptyset$ for all $i = 1, 2, \cdots, t - 1$.
It is clear that $s_{1} = 1$, $s_{t} = n - 1$, $u \notin C_{s_{j}}^{'}$ for all $j = 2, 3, \cdots, t$ and $v \notin C_{s_{j}}^{'}$ for all $j = 1, 2, \cdots t - 1$.
For the circuits $C_{s_{1}}^{'}, C_{s_{2}}^{'}, \cdots, C_{s_{t}}^{'}$, repeat the above discussion.
Finally, we can obtain two circuits $C_{u}$ and $C_{v}$ such that $u \in C_{u} - C_{v}$, $v \in C_{v} - C_{u}$ and $C_{u} \bigcap C_{v} \neq \emptyset$.
Utilizing Lemma \ref{L:thetransitivityofcircuits}, there exists $C \in \mathcal{C}(M)$ such that $\{u, v\} \subseteq C \subseteq C_{u} \bigcup C_{v}$,
i.e., $uv \in E(G(M))$ which implies $n = 1$.
It contradicts the assumption that $n \geq 2$.
Hence the result has been proved.
\end{proof}

\begin{remark}
Any connected component of the graph $G(M)$ is an isolated vertex or a complete graph.
Once the graph is connected, it is a completed graph.
\end{remark}

By the above proposition, the relationship between the connectedness of a matroid and that of the graph induced by the matroid can be embodied.
We find that they have the same connectedness.

\begin{theorem}
\label{T:therelationshipbetweenamtroidandthegraphinducedbythematroid}
Let $M$ be a matroid.
The graph $G(M)$ is connected if and only if the matroid $M$ is connected.
\end{theorem}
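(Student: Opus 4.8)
The plan is to obtain the result as a short chain of equivalences, since the substantive work has already been carried out in Proposition~\ref{P:anequivalencecharacterizationforanytwovertexicesofthegraphinducedbymatroid}. Recall that the vertex set of $G(M)$ is exactly $V = U(M)$, so a pair of distinct vertices of $G(M)$ is precisely a pair of distinct elements of the ground set $U(M)$; this identification is what allows the graph-theoretic and matroid-theoretic notions of connectedness to be compared directly.

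First, I would unwind the definition of a connected graph: $G(M)$ is connected if and only if every pair of distinct vertices $u, v \in V$ is joined by a $(u,v)$-path, that is, $u$ is connected to $v$ in $G(M)$. Second, I would invoke Proposition~\ref{P:anequivalencecharacterizationforanytwovertexicesofthegraphinducedbymatroid} to replace the graph-theoretic condition ``$u$ is connected to $v$ in $G(M)$'' by the equivalent matroid-theoretic condition ``there exists $C \in \mathcal{C}(M)$ such that $\{u, v\} \subseteq C$''. Combining these two observations, $G(M)$ is connected if and only if for every pair of distinct elements $u, v$ of $U(M)$ there is a circuit of $M$ containing both. Third, I would appeal to Proposition~\ref{P:anequivalencecharacterizationforconnectedgraph}, which states precisely that this last condition is equivalent to $M$ being connected. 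Stringing the three equivalences together gives that $G(M)$ is connected if and only if $M$ is connected.

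Since each link in the chain is either a definition or a previously established proposition, there is no genuine obstacle remaining at this stage; the technical heart of the matter---upgrading connection by an arbitrary path in $G(M)$ to a single circuit containing both endpoints---was already absorbed into Proposition~\ref{P:anequivalencecharacterizationforanytwovertexicesofthegraphinducedbymatroid} via Lemma~\ref{L:thetransitivityofcircuits}. The only point I would flag for a moment's care is the degenerate behaviour when $M$ has loops: a loop $e$ forms a singleton circuit, so by axiom $(C2)$ it lies in no other circuit and is therefore isolated both as a matroid component and as a vertex of $G(M)$, which keeps the two notions of connectedness aligned even in this boundary case.
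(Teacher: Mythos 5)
Your proposal is correct and follows essentially the same route as the paper: both proofs string together the definition of graph connectedness, Proposition~\ref{P:anequivalencecharacterizationforanytwovertexicesofthegraphinducedbymatroid}, and Proposition~\ref{P:anequivalencecharacterizationforconnectedgraph} into a three-link chain of equivalences. The extra remark about loops is a harmless additional observation not needed for the argument.
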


\begin{proof}
According to Proposition \ref{P:anequivalencecharacterizationforconnectedgraph} and \ref{P:anequivalencecharacterizationforanytwovertexicesofthegraphinducedbymatroid},
$G(M)$ is connected $\Leftrightarrow$ for any two distant vertices $u$ and $v$ of $V(G(M))$, $u$ is connected to $v$ $\Leftrightarrow$ for any two distant vertices $u$ and $v$ of $U(M)$, there exists $C \in \mathcal{C}(M)$ such that $\{u, v\} \subseteq C$ $\Leftrightarrow$ $M$ is connected.
\end{proof}

Now that a matroid and the graph induced by the matroid have the same connectedness,
if the graph has isolated vertices, then the matroid is disconnected.
From Example \ref{E:theexampleofthegrphinducedbyamatroid}, we find that whether the graph has isolated vertices or not is not only determined by the loops of the matroid.
Indeed, it also has a relation with the other circuits of the matroid.

\begin{proposition}
\label{P:theconditionforthecircuitsofamatroidformacovering}
Let $M$ be a matroid.
The graph $G(M)$ has no isolated vertices if and only if $M$ is loopless and $\mathcal{C}(M)$ is a covering of $U(M)$.
\end{proposition}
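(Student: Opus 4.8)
The plan is to prove both directions of the biconditional, unpacking the statement ``$G(M)$ has no isolated vertices'' into a condition about every ground-set element participating in an edge of $G(M)$, and then translating that edge condition into the language of circuits via Definition \ref{D:thegraphinducedbyamatroid}. The key observation is that a vertex $v \in V = U(M)$ is isolated precisely when it belongs to no edge, which by part (2) of Definition \ref{D:thegraphinducedbyamatroid} means there is no circuit $C$ and no distinct vertex $u$ with $\{u,v\} \subseteq C$. So the absence of isolated vertices is equivalent to saying: every $v \in U(M)$ lies in some circuit that contains at least one other element.

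For the ``$(\Leftarrow)$'' direction I would assume $M$ is loopless and that $\mathcal{C}(M)$ is a covering of $U(M)$. Take any $v \in U(M)$. Since $\mathcal{C}(M)$ covers $U(M)$, there is a circuit $C$ with $v \in C$. Because $M$ is loopless, $C$ is not the singleton $\{v\}$ (indeed no circuit is a singleton), so $|C| \geq 2$ and there exists $u \in C$ with $u \neq v$. Then $\{u,v\} \subseteq C$, so $uv \in E(G(M))$ and $v$ is not isolated. As $v$ was arbitrary, $G(M)$ has no isolated vertices.

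For the ``$(\Rightarrow)$'' direction I would assume $G(M)$ has no isolated vertices and prove the two conjuncts. First, loopless: if $M$ had a loop, say $\{x\} \in \mathcal{C}(M)$, I would argue that $x$ must be isolated. The subtlety here, which is the main obstacle, is that $x$ could in principle belong to a larger circuit as well; so I cannot conclude isolation directly from the existence of the loop. Instead I invoke axiom (C2) (minimality): since $\{x\} \in \mathcal{C}(M)$ and $\{x\} \subseteq C$ for any circuit $C$ containing $x$, minimality forces $C = \{x\}$. Hence no circuit containing $x$ has a second element, so no edge meets $x$, making $x$ isolated, contradicting the hypothesis. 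Therefore $M$ is loopless. Second, covering: I must show $\bigcup \mathcal{C}(M) = U(M)$. Take any $v \in U(M)$; since $v$ is not isolated, there is some $u \neq v$ with $uv \in E(G(M))$, which by Definition \ref{D:thegraphinducedbyamatroid} gives a circuit $C \in \mathcal{C}(M)$ with $\{u,v\} \subseteq C$, so $v \in \bigcup \mathcal{C}(M)$. Thus every element of $U(M)$ lies in a circuit, giving $\bigcup \mathcal{C}(M) = U(M)$, and since no circuit is empty by (C1), $\mathcal{C}(M)$ is a covering.

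The only genuinely delicate point is the loopless argument in the forward direction: one must resist concluding isolation from the mere presence of a loop and instead use the minimality axiom (C2) to rule out any larger circuit through the loop element. Everything else is a direct translation between the graph's edge relation and the circuit structure, so I expect the write-up to be short once that point is handled carefully.
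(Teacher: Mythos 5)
Your proof is correct and takes essentially the same route as the paper's: both directions translate ``no isolated vertices'' into circuit conditions via Definition~\ref{D:thegraphinducedbyamatroid}. You are in fact a bit more careful in two places --- the paper dismisses looplessness in the forward direction as ``clear'' while you correctly ground it in axiom (C2), and you obtain the covering property directly from the edge definition where the paper detours through Proposition~\ref{P:anequivalencecharacterizationforanytwovertexicesofthegraphinducedbymatroid} --- but these are refinements of the same argument, not a different approach.
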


\begin{proof}
$(``\Leftarrow")$: If $u$ is an isolated vertex of $G(M)$, then for all $v \in U(M) - \{u\}$, there does not exist $C \in \mathcal{C}(M)$ such that $\{u, v\} \subseteq C$.
Thus there exists $C \in \mathcal{C}(M)$ such that $C = \{u\}$ or there does not exist $C \in \mathcal{C}(M)$ such that $u \in C$, which contradicts the assumption that $M$ is loopless and $\mathcal{C}(M)$ is a covering of $U(M)$, respectively.

$(``\Rightarrow")$: It is clear that $G(M)$ has no isolated vertices implies that $M$ is loopless.
Next, we need to prove $\mathcal{C}(M)$ is a covering of $U(M)$.
According to the circuit axiom, we know $\emptyset \notin \mathcal{C}(M)$.
For all $u \in U(M)$, there exists an element of $U(M)$ which is different from $u$ such that $u$ is connected to $v$.
Utilizing Proposition \ref{P:anequivalencecharacterizationforanytwovertexicesofthegraphinducedbymatroid}, there exists $C_{u} \in \mathcal{C}(M)$ such that $u \in C_{u}$.
Thus $U(M) = \bigcup_{u \in U(M)}\{u\} \subseteq \bigcup_{u \in U(M)}C_{u} \subseteq \bigcup \mathcal{C}(M) \subseteq U(M)$, i.e., $U(M) = \bigcup \mathcal{C}(M)$.
Therefore $\mathcal{C}(M)$ is a covering of $U(M)$.
\end{proof}

Next, we pay our attention to the connectedness of the matroid whose induced graph has no isolated vertices.
In this part, we introduce the approaches proposed in Section \ref{S:Thestudyofconnectioningraphsthroughcovering-basedroughsets} to study the issue.
As is known, a graph without isolated vertices can induce a covering through Definition \ref{D:thefamilyinducedbygraphs}.
Combining with Proposition \ref{P:theconditionforthecircuitsofamatroidformacovering}, we know for a matroid $M$, when the graph $G(M)$ has no isolated vertices, there are two coverings of its vertex set, i.e. $\mathcal{C}(M)$ and $C(G(M))$.
Generally, these two coverings are different, but they can induce the same covering upper approximation operator.

\begin{lemma}
\label{L:therelationbetweenthecoveringapproximationinducedbyG(M)andthatofinducedbyM}
Let $M$ be a matroid.
If graph $G(M)$ has no isolated vertices, then for all $X \subseteq U(M)$, $\overline{C(G(M))}(X) = \overline{\mathcal{C}(M)}(X)$.
\end{lemma}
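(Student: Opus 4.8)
The plan is to prove the two operators agree on every $X \subseteq U(M)$ by showing that both coverings $\mathcal{C}(M)$ and $C(G(M))$ produce the same union when we collect the blocks meeting $X$. Recall that by definition $\overline{C(G(M))}(X) = \bigcup\{K \in C(G(M)): K \cap X \neq \emptyset\}$ and similarly $\overline{\mathcal{C}(M)}(X) = \bigcup\{C \in \mathcal{C}(M): C \cap X \neq \emptyset\}$. Since both are unions of sets, I would establish equality by proving mutual inclusion at the level of individual points: for each $w \in U(M)$, I show $w \in \overline{C(G(M))}(X)$ if and only if $w \in \overline{\mathcal{C}(M)}(X)$.

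First I would unpack what membership in each set means. A point $w$ lies in $\overline{\mathcal{C}(M)}(X)$ exactly when there is a circuit $C \in \mathcal{C}(M)$ with $w \in C$ and $C \cap X \neq \emptyset$; that is, there exists some $x \in X$ and a circuit containing both $w$ and $x$ (or $w = x$ itself, using that $\mathcal{C}(M)$ is a covering so $w$ lies in some circuit). A point $w$ lies in $\overline{C(G(M))}(X)$ exactly when there is a block $K = \{w, v\} \in C(G(M))$ with $K \cap X \neq \emptyset$, i.e. either $w \in X$, or there is an edge $wv$ of $G(M)$ with $v \in X$. The crucial link between these two descriptions is Proposition \ref{P:anequivalencecharacterizationforanytwovertexicesofthegraphinducedbymatroid}: for distinct $w, v$, the edge $wv$ belongs to $G(M)$ if and only if some circuit of $M$ contains both $w$ and $v$.

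With that proposition in hand, the two inclusions become almost symmetric. For the direction $\overline{C(G(M))}(X) \subseteq \overline{\mathcal{C}(M)}(X)$, suppose $w \in \overline{C(G(M))}(X)$ witnessed by an edge block $\{w, v\}$ with $v \in X$ (the case $w \in X$ being handled by the covering property of $\mathcal{C}(M)$). Then $wv$ is an edge of $G(M)$, so by the proposition there is a circuit $C$ with $\{w, v\} \subseteq C$; since $v \in X$ we have $C \cap X \neq \emptyset$ and $w \in C$, giving $w \in \overline{\mathcal{C}(M)}(X)$. For the reverse direction, suppose $w \in \overline{\mathcal{C}(M)}(X)$ via a circuit $C$ with $w \in C$ and some $x \in C \cap X$; if $w = x$ then $w \in X \subseteq \overline{C(G(M))}(X)$ by extension, and otherwise $w \neq x$ are both in the circuit $C$, so again by the proposition $wx$ is an edge of $G(M)$, whence $\{w, x\} \in C(G(M))$ meets $X$ at $x$ and so $w \in \overline{C(G(M))}(X)$.

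The main obstacle I anticipate is not the logical structure, which is a clean back-and-forth, but the careful handling of the degenerate cases. I need the hypothesis that $G(M)$ has no isolated vertices, because Proposition \ref{P:theconditionforthecircuitsofamatroidformacovering} then guarantees that $\mathcal{C}(M)$ is genuinely a covering of $U(M)$; without this, a point of $X$ might lie in no circuit at all, and the equation $w \in X \implies w \in \overline{\mathcal{C}(M)}(X)$ could fail because the extension property of $\overline{\mathcal{C}(M)}$ relies on $\mathcal{C}(M)$ covering $U(M)$. I would therefore state explicitly at the outset that this hypothesis forces $\mathcal{C}(M)$ to be a covering and $M$ to be loopless, so that every singleton $\{x\}$ with $x \in X$ is contained in some circuit, making both extension properties available. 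Once these boundary situations are dispatched, the equivalence of the two membership conditions via Proposition \ref{P:anequivalencecharacterizationforanytwovertexicesofthegraphinducedbymatroid} completes the argument.
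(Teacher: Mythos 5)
Your proposal is correct and follows essentially the same route as the paper: prove mutual inclusion pointwise, splitting into the case $x \in X$ (dispatched by the extension property, available because both families are coverings) and the case $x \notin X$ (dispatched by the correspondence between edges of $G(M)$ and common circuits of $M$). The only cosmetic difference is that you attribute the edge--circuit correspondence to Proposition \ref{P:anequivalencecharacterizationforanytwovertexicesofthegraphinducedbymatroid}, whereas for distinct vertices this is already the content of Definition \ref{D:thegraphinducedbyamatroid}; the paper uses the proposition for one inclusion and the definition for the other, and either reference suffices.
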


\begin{proof}
Since $G(M)$ has no isolated vertices, the families $C(G(M))$ and $\mathcal{C}(M)$ are two coverings of $U(M)$.
Thus $\overline{C(G(M))}$ and $\overline{\mathcal{C}(M)}$ are two covering upper approximation operators of $U(M)$, respectively.
Next, we prove these two operators are equal.
For all $x \in \overline{C(G(M))}(X)$, there exists $K \in C(G(M))$ such that $x \in K$ and $K \bigcap X \neq \emptyset$.
If $x \in X$, then $x \in \overline{\mathcal{C}(M)}(X)$ because $X \subseteq \overline{\mathcal{C}(M)}(X)$.
If $x \notin X$, pitch $y \in K \bigcap X$, then $x \neq y$ and $K = \{x, y\}$, i.e., $x$ is connected to $y$.
According to Proposition \ref{P:anequivalencecharacterizationforanytwovertexicesofthegraphinducedbymatroid}, there exists $C \in \mathcal{C}(M)$ such that $\{x, y\} \subseteq C$ which implies that $x \in \overline{\mathcal{C}(M)}(X)$.
Hence $\overline{C(G(M))}(X) \subseteq \overline{\mathcal{C}(M)}(X)$.
Conversely, for any $x \in \overline{\mathcal{C}(M)}(X)$, there exists $C \in \mathcal{C}(M)$ such that $x \in C$ and $C \bigcap X \neq \emptyset$.
If $x \in X$, then $x \in \overline{C(G(M))}(X)$ because $X \subseteq \overline{C(G(M))}(X)$.
If $x \notin X$, pitch $y \in C \bigcap X$, then $x \neq y$ and $\{x, y\} \subseteq C$, i.e., $x$ and $y$ are adjacent in graph $G(M)$.
Thus $\{x, y\} \in C(G(M))$ which implies $x \in \overline{C(G(M))}(X)$.
Hence $\overline{\mathcal{C}(M)}(X) \subseteq \overline{C(G(M))}(X)$.
\end{proof}

Therefore, the connectedness for the matroid, whose induced graph has no isolated vertices, can be characterized by the circuit covering.

\begin{theorem}
\label{T:theapproachestojustifytheconnectionofamatroid}
Let $M$ be a loopless matroid and the circuit family $\mathcal{C}(M)$ a covering of $U(M)$.
The following statements are equivalent:\\
(1) $M$ is connected.\\
(2) For any $\emptyset \neq X \subset U(M)$, $\overline{\mathcal{C}(M)}(X) \neq X$.\\
(3) For any $x \in U(M)$, $\overline{\mathcal{C}(M)}(x) = U(M)$.
\end{theorem}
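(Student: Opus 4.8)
The plan is to establish the two equivalences $(1)\Leftrightarrow(2)$ and $(1)\Leftrightarrow(3)$ separately, since they draw on different parts of the machinery already built. Before doing either, I would record a preliminary observation: the hypotheses that $M$ is loopless and that $\mathcal{C}(M)$ is a covering of $U(M)$ are, by Proposition \ref{P:theconditionforthecircuitsofamatroidformacovering}, exactly the conditions guaranteeing that the induced graph $G(M)$ has no isolated vertices. I would invoke this first, because it is what licenses the use of both Lemma \ref{L:therelationbetweenthecoveringapproximationinducedbyG(M)andthatofinducedbyM} and the graph-level results used below.

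For $(1)\Leftrightarrow(2)$, I would simply chain together the earlier theorems. By Theorem \ref{T:therelationshipbetweenamtroidandthegraphinducedbythematroid}, $M$ is connected if and only if $G(M)$ is connected. Since $G(M)$ has no isolated vertices, Theorem \ref{T:thesecondnecessaryandsufficientconditionforconnectedgraphfromupperoperator} applies to $G(M)$ and states that $G(M)$ is connected if and only if $\overline{C(G(M))}(X) \neq X$ for every $\emptyset \neq X \subset V(G(M))$. Finally, since $V(G(M)) = U(M)$ and, by Lemma \ref{L:therelationbetweenthecoveringapproximationinducedbyG(M)andthatofinducedbyM}, the operators $\overline{C(G(M))}$ and $\overline{\mathcal{C}(M)}$ agree on every subset, I can replace $\overline{C(G(M))}$ by $\overline{\mathcal{C}(M)}$ throughout, which yields statement $(2)$ verbatim.

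For $(1)\Leftrightarrow(3)$ (reading $\overline{\mathcal{C}(M)}(x)$ as $\overline{\mathcal{C}(M)}(\{x\})$), I would argue directly from the definition of the upper approximation together with the circuit characterization of connectedness. The key observation is that $\overline{\mathcal{C}(M)}(\{x\}) = \bigcup\{C \in \mathcal{C}(M): x \in C\}$, the union of all circuits through $x$. For $(1)\Rightarrow(3)$: if $M$ is connected, then by Proposition \ref{P:anequivalencecharacterizationforconnectedgraph} every $y \neq x$ lies in a common circuit with $x$, so $y \in \overline{\mathcal{C}(M)}(\{x\})$; together with $x$ itself this forces $\overline{\mathcal{C}(M)}(\{x\}) = U(M)$. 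For $(3)\Rightarrow(1)$: if $\overline{\mathcal{C}(M)}(\{x\}) = U(M)$ for every $x$, then for any distinct $x,y$ the element $y$ lies in the union of circuits through $x$, so some circuit contains both $x$ and $y$; applying Proposition \ref{P:anequivalencecharacterizationforconnectedgraph} again gives that $M$ is connected.

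I do not expect a genuine obstacle here: the substance of the argument resides in the earlier results, and the present theorem is essentially a repackaging of them. The only points requiring care are (a) verifying, via Proposition \ref{P:theconditionforthecircuitsofamatroidformacovering}, that the hypotheses on $M$ translate into ``$G(M)$ has no isolated vertices,'' which is precisely what makes both Lemma \ref{L:therelationbetweenthecoveringapproximationinducedbyG(M)andthatofinducedbyM} and Theorem \ref{T:thesecondnecessaryandsufficientconditionforconnectedgraphfromupperoperator} applicable, and (b) correctly unpacking $\overline{\mathcal{C}(M)}(\{x\})$ as the union of circuits through $x$ in the $(1)\Leftrightarrow(3)$ step.
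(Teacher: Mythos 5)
Your proposal is correct, and for the equivalence $(1)\Leftrightarrow(2)$ it coincides with the paper's argument: the same chain through Theorem \ref{T:therelationshipbetweenamtroidandthegraphinducedbythematroid}, Theorem \ref{T:thesecondnecessaryandsufficientconditionforconnectedgraphfromupperoperator}, and Lemma \ref{L:therelationbetweenthecoveringapproximationinducedbyG(M)andthatofinducedbyM}. You are also right, and more careful than the paper, to flag explicitly that the hypotheses (looplessness plus $\mathcal{C}(M)$ being a covering) are exactly what Proposition \ref{P:theconditionforthecircuitsofamatroidformacovering} requires for $G(M)$ to have no isolated vertices, which is the licence to apply the graph-level results.

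Where you genuinely diverge is $(1)\Leftrightarrow(3)$. The paper stays inside the graph picture: it uses the (unproved) remark that a connected $G(M)$ is a complete graph, observes that completeness is equivalent to $\overline{C(G(M))}(\{x\})=V$ for every vertex $x$, and then translates back via Lemma \ref{L:therelationbetweenthecoveringapproximationinducedbyG(M)andthatofinducedbyM}. You instead unpack $\overline{\mathcal{C}(M)}(\{x\})=\bigcup\{C\in\mathcal{C}(M):x\in C\}$ directly and invoke the circuit characterization of connected matroids (Proposition \ref{P:anequivalencecharacterizationforconnectedgraph}): connectedness means every $y\neq x$ shares a circuit with $x$, which is precisely the statement that this union is all of $U(M)$ (membership of $x$ itself being guaranteed by the covering hypothesis, or by the extension property). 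Your route is more elementary and self-contained --- it bypasses $G(M)$ entirely for this half and does not lean on the completeness remark --- at the cost of not reusing the graph machinery that the paper is showcasing; the paper's route keeps everything flowing through the covering induced by $G(M)$, which is the methodological point of the section. Both are valid.
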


\begin{proof}
$(1) \Leftrightarrow (2)$: $M$ is connected if and only if $G(M)$ is connected if and only if, for all $\emptyset \neq X \subset U(M)$, $\overline{C(G(M))}(X) \neq X$ if and only if, for all $\emptyset \neq X \subset U(M)$, $\overline{\mathcal{C}(M)}(X) \neq X$.

$(1) \Leftrightarrow (3)$: $M$ is connected if and only if $G(M)$ is a complete graph if and only if, for all $x \in U(M)$, $\overline{C(G(M))}(x) = U(M)$ if and only if, for all $x \in U(M)$, $\overline{\mathcal{C}(M)}(x) = U(M)$.
\end{proof}

From the above discussion, there are three steps to determine the connectedness of a matroid $M$.\\
\textbf{Step} 1: Judge whether $M$ has loops or not.\\
\textbf{Step} 2: Judge whether or not $\mathcal{C}(M)$ is a covering of $U(M)$.\\
\textbf{Step} 3: If the matroid has no loops and its circuit family forms a covering of its ground set, then utilize Theorem \ref{T:theapproachestojustifytheconnectionofamatroid} to determine the connectedness of it.

\section{Conclusions}
\label{S:conclusions}
We have discussed in this paper the issue of the connection of graphs which are undirected, simple and without isolated vertices in terms of covering-based rough sets.
Furthermore, the approaches to study the connection of graphs were applied to study that of matroids.
Based on the results of this paper, we intend designing efficient algorithms to determine the connection of a graph and that of a matroid, and will investigate some other problems of graph theory and matroid theory through rough sets.

\section{Acknowledgments}
This work is supported in part by the National Natural Science Foundation of China under Grant Nos. 61170128, 61379049, and 61379089, the Natural Science Foundation of Fujian Province, China, under Grant No. 2012J01294, the Science and Technology Key Project of Fujian Province, China, under Grant No. 2012H0043, and the Zhangzhou Research Fund under Grant No. Z2011001.



\begin{thebibliography}{35}
\expandafter\ifx\csname natexlab\endcsname\relax\def\natexlab#1{#1}\fi
\providecommand{\bibinfo}[2]{#2}
\ifx\xfnm\relax \def\xfnm[#1]{\unskip,\space#1}\fi

\bibitem[{Chudnovsky ea~al(2011)Chudnovsky, Ries and Zwols}]{ChudnovskyRiesZwolsc11Claw}
\bibinfo{author}{M.~Chudnovsky},
\bibinfo{author}{B.~Ries},
\bibinfo{author}{Y.~Zwols},
\bibinfo{title}{Claw-free graphs with strongly perfect complements. Fractional and integral version. Part I. Basic graphs},
\bibinfo{journal}{Discrete Applied Mathematics}
\bibinfo{volume}{159}(\bibinfo{year}{2011})
\bibinfo{pages}{1971--1995}.

\bibitem[{Edmonds(1971)}]{Edmonds71Matroids}
\bibinfo{author}{J.~Edmonds},
\bibinfo{title}{Matroids and the greedy algorithm},
\bibinfo{journal}{Mathematical Programming}
\bibinfo{volume}{1}(\bibinfo{year}{1971})
\bibinfo{pages}{127--136}.

\bibitem[{Fukami(2014)}]{Fukami14New}
\bibinfo{author}{T.~Fukami},
\bibinfo{author}{N.~Takahashi},
\bibinfo{title}{New classes of clustering coefficient locally maximizing graphs},
\bibinfo{journal}{Discrete Applied Mathematics}
\bibinfo{volume}{162}(\bibinfo{year}{2014})
\bibinfo{pages}{202--213}.

\bibitem[{Gossen ea~al.(2014)Gossen, Kotzyba and Nurnberger}]{GossenKotzybaN¨¹rnberger14Graph}
\bibinfo{author}{T.~Gossen},
\bibinfo{author}{M.~Kotzyba},
\bibinfo{author}{A.~N$\ddot{u}$rnberger},
\bibinfo{title}{Graph clusterings with overlaps: Adapted quality indices and a generation model},
\bibinfo{journal}{Neurocomputing}
\bibinfo{volume}{123}(\bibinfo{year}{2014})
\bibinfo{pages}{13--22}.

\bibitem[{Nettleton(2013)}]{Nettleton13Data}
\bibinfo{author}{DF~Nettleton},
\bibinfo{title}{Data mining of social networks represented as graphs},
\bibinfo{journal}{Computer Science Review}
\bibinfo{volume}{7}(\bibinfo{year}{2013})
\bibinfo{pages}{1--34}.

\bibitem[{Oxley(1993)}]{Oxley93Matroid}
\bibinfo{author}{J.G. Oxley},
\bibinfo{title}{Matroid theory},
\bibinfo{publisher}{Oxford University Press},
\bibinfo{address}{New York},
\bibinfo{year}{1993}.

\bibitem[{Peng et~al.(2013)Peng, Zhang and Zhang}]{PengZhangZhang13Asurvey}
\bibinfo{author}{B.~Peng},
\bibinfo{author}{L.~Zhang},
\bibinfo{author}{D.~Zhang},
\bibinfo{title}{A survey of graph theoretical approaches to image segmentation},
\bibinfo{journal}{Pattern Recognition}
\bibinfo{volume}{46} (\bibinfo{year}{2013})
\bibinfo{pages}{1020--1038}.


\bibitem[{Pomykala (1987)}]{pomykala1987approximation}
\bibinfo{author}{J.A.~Pomykala},
\bibinfo{title}{Approximation operations in approximation space},
\bibinfo{journal}{Bulletin of the Polish Academy of Sciences}
\bibinfo{volume}{35}(\bibinfo{year}{1987})
\bibinfo{pages}{653--662}.

\bibitem[{Shelokar et~al.(2013)Shelokar, Quirin and Cord¨®n}]{ShelokarQuirinCord¨®n13Amultiobjective}
\bibinfo{author}{P.~Shelokar},
\bibinfo{author}{A.~Shelokar},
\bibinfo{author}{W.~Cord$\acute{o}$n},
\bibinfo{title}{A multiobjective evolutionary programming framework for graph-based data mining},
\bibinfo{journal}{Information Sciences}
\bibinfo{volume}{237} (\bibinfo{year}{2013})
\bibinfo{pages}{118--136}.

\bibitem[{Wang et~al.(2013)Wang, Zhu, Zhu and Min}]{WangZhu13Equivalent}
\bibinfo{author}{S.~Wang},
\bibinfo{author}{Q.~Zhu},
\bibinfo{author}{W.~Zhu},
\bibinfo{author}{F.~Min},
\bibinfo{title}{Equivalent Characterizations of Some Graph Problems by Covering-Based Rough Sets},
\bibinfo{journal}{Journal of Applied Mathematics}
\bibinfo{volume}{2013} (\bibinfo{year}{2013}),
\bibinfo{pages}{Article ID 519173, 7 pages}.

\bibitem[{West(2004)}]{West04Introduction}
\bibinfo{author}{D.B. West},
\bibinfo{title}{INTRODUCTION TO GRAPH THEORY},
\bibinfo{publisher}{China Machine Press},
\bibinfo{address}{Beijing},
\bibinfo{year}{2004}.


\bibitem[{Zhou et~al.(2013)Zhou, Zheng and Wei(2013)}]{ZhouZhengWei13Texture}
\bibinfo{author}{H.~Zhou},
\bibinfo{author}{J.~Zheng},
\bibinfo{author}{L.~Wei},
\bibinfo{title}{Texture aware image segmentation using graph cuts and active contours},
\bibinfo{journal}{Pattern Recognition}
\bibinfo{volume}{46}(\bibinfo{year}{2013})
\bibinfo{pages}{1719--1733}.


\bibitem[{Zhu and Wang(2003)}]{ZhuWang03Reduction}
\bibinfo{author}{W.~Zhu},
\bibinfo{author}{F.~Wang},
\bibinfo{title}{Reduction and axiomization of covering generalized rough sets},
\bibinfo{journal}{Information Sciences}
\bibinfo{volume}{152}(\bibinfo{year}{2003})
\bibinfo{pages}{217--230}.


\end{thebibliography}

\end{document}